\documentclass[11pt]{article}

\usepackage[final]{acl}

\usepackage{times}
\usepackage{latexsym}
\usepackage{amsmath}
\usepackage{subcaption}
\usepackage{amsthm}
\usepackage[T1]{fontenc}
\usepackage[utf8]{inputenc}

\usepackage{microtype}

\usepackage{inconsolata}

\usepackage{graphicx}
\usepackage{xspace}
\usepackage{amsfonts}
\usepackage[capitalise]{cleveref}
\usepackage{booktabs}
\usepackage{multirow}
\usepackage{tcolorbox}
\usepackage{dsfont}
\usepackage{algorithm}
\usepackage{algpseudocode}

\newtheorem{theorem}{Theorem}[section]

\newtheorem{proposition}[theorem]{Proposition}

\crefname{subsection}{Appendix}{Appendices}

\title{Select, Label, Evaluate: Active Testing in NLP}

\author{
 \textbf{Antonio Purificato\textsuperscript{1,2}},
 \textbf{Maria Sofia Bucarelli\textsuperscript{3,$\star$}},
 \textbf{Andrea Bacciu\textsuperscript{1}},\\
 \textbf{Fabrizio Silvestri\textsuperscript{2}},
 \textbf{Amin Mantrach\textsuperscript{1}}
\\
\\
 \textsuperscript{1} Amazon,
 \textsuperscript{2} Sapienza University of Rome,
 \textsuperscript{3}Université Côte d’Azur, CNRS, Inria, I3S
\\
 \small{
   \textbf{Correspondence:} \href{mailto:email@domain}{purifian@amazon.com}
 }
}

\algrenewcommand\algorithmiccomment[1]{\# #1}

\newcommand{\ndatasets}{18\xspace}
\newcommand{\nembeddings}{4\xspace}
\newcommand{\ntasks}{4\xspace}
\newcommand{\eg}{\textit{e.g.}\xspace}
\newcommand{\at}{Active Testing\xspace}
\newcommand{\expected}{\mathbb{E}}

\usepackage{mathtools}
\usepackage{float}
\newfloat{pipeline}{tbp}{lop}
\floatname{pipeline}{Pipeline}
\crefname{pipeline}{Pipeline}{Pipelines}
\Crefname{pipeline}{Pipeline}{Pipelines}

\begin{document}
\maketitle
\begingroup
\renewcommand{\thefootnote}{}
\footnotetext{$^{\star}$ Work done while at Sapienza.}
\endgroup
\begin{abstract}
Human annotation cost and time remain significant bottlenecks in Natural Language Processing (NLP), with test data annotation being particularly expensive due to the stringent requirement for low-error and high-quality labels necessary for reliable model evaluation. Traditional approaches require annotating entire test sets, leading to substantial resource requirements. 
\at  is a framework that selects the most informative test samples for annotation. Given a labeling budget, it aims to choose the subset that best estimates model performance while minimizing cost and human effort.
In this work, we formalize \at in NLP and we conduct an extensive benchmarking of existing approaches across \ndatasets datasets and \nembeddings embedding strategies spanning \ntasks different NLP tasks. 
The experiments show annotation reductions of up to 95\%, with performance estimation accuracy difference from the full test set  within 1\%.
Our analysis reveals variations in method effectiveness across different data characteristics and task types, with no single approach emerging as universally superior.
Lastly, to address the limitation of requiring a predefined annotation budget in existing sample selection strategies, we introduce an adaptive stopping criterion that automatically determines the optimal number of samples. We release our code at \url{https://github.com/amazon-science/NLPActiveTesting}.
\end{abstract}

\section{Introduction}

\begin{figure}
    \centering
    \includegraphics[width=\linewidth]{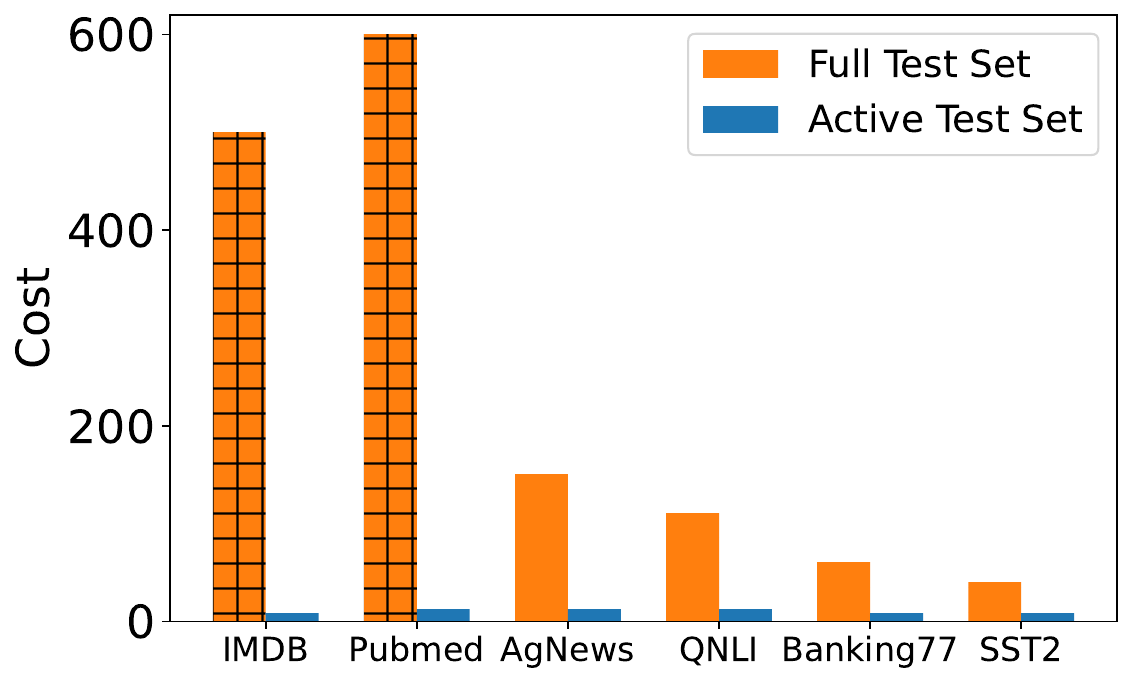}
    \caption{Annotation cost of using \at vs full test set ($\downarrow$ is better). Each sample has a cost of 0.02\$   (\href{https://labelyourdata.com/pricing}{LabelYourData}). The active cost is the one required to reach an estimation quality $<1\%$ wrt. full budget.}
    \label{fig:money_saved}
\end{figure}

Natural Language Processing (NLP) has witnessed unprecedented advances in recent years, with state-of-the-art models achieving remarkable performance across numerous tasks including text classification~\cite{liu2023zero} and question answering~\cite{parekh-etal-2025-dynamic}. However, the development and evaluation of these models continue to face a significant bottleneck: the acquisition of high-quality labeled data, particularly for test sets~\cite{kreutzer2022quality}. While training data can tolerate some level of noise, test sets require meticulous annotation with minimal error rates to serve as reliable benchmarks for model evaluation~\cite{gururangan-etal-2018-annotation}.
The conventional approach to model testing in NLP involves exhaustive annotation of entire test sets, representing a substantial investment of time and financial resources. This burden is acute in domains requiring specialized knowledge like the annotation of medical texts~\cite{wei2018clinical} or technical content often requires trained annotators, driving up costs and limiting scalability. Full test-set evaluation is often impractical: annotation costs vary notably across examples (\eg, in multilingual settings), and many samples may be redundant, with minimal impact on performance estimation.

In recent years, Large Language Models (LLMs) and AI agents have increasingly been used for data generation, annotation, and validation at scale~\cite{karim2025transforming}. While promising for reducing annotation effort, they raise concerns about quality, bias, and evaluation reliability, often still requiring human verification for test sets~\cite{zhen2025enhancing}.
\at addresses these challenges by selecting informative test samples using only unlabeled inputs, before annotation. Since not all test instances contribute equally to evaluation~\cite{kossen2021active}, \at aims to estimate model performance with statistical guarantees while minimizing annotation cost and preserving agreement with full test set evaluation.

This can be noted by looking at \cref{fig:money_saved}, which shows the amount of money saved by annotating using \at with respect to annotating the full test set. While related concepts have been explored in areas such as Active Learning~\cite{ahmadnia-etal-2025-active}, the systematic application of these principles to test data in NLP remains largely unexplored. %Moreover, most modern LLM evaluation methods, including LLM-as-a-judge, reduce open-ended tasks to classification or scoring. \at applies directly, enabling large annotation savings while preserving reliable performance estimates. 
\at differs from Active Learning in its objectives: while Active Learning aims to improve model training with fewer labeled examples, \at focuses on estimating test performance with minimal annotation, requiring different sampling strategies and metrics tailored to the testing context~\cite{bias}.
Specifically, our contributions are:
\begin{itemize}
    \item We present the first framework for \at in NLP, offering objectives and evaluation criteria and establishing a basis for research. Our results demonstrate annotation reductions of up to $\mathbf{95}\%$ while preserving performance estimate accuracy within $\mathbf{1}\%$ of testing on the full test set. We introduce an adaptive stopping criterion addressing a limitation of existing  \at methods: the need to pre-specify the annotation budget. 
    \item We show how \at operates in multilingual settings, optimizing cost by strategically allocating budget across languages.
    \item We show how \at can be applied for detection of samples from the minority class, helping annotators in identifying relevant samples. This is relevant in domains such as finance or e-commerce, where rare but high-impact events must be accurately identified under limited annotation budgets.
\end{itemize}

\section{Related Work}

Recent work in \at can be broadly into two areas: approaches focusing on optimal sampling under budget constraints, and methods designed for \at. However, budget-constrained sampling approaches, although sharing the goal of reducing annotation costs, are not comparable as they do not target test set optimization.

\subsection{Sampling with a limited budget}

\citet{zouhar2025select} investigate strategies for efficient subset selection in human evaluation of Natural Language Generation models, aiming to reduce annotation costs while preserving accurate rankings. They propose two frameworks: output-based methods, relying on model outputs and automatic metrics, and source-based methods, predicting item utility directly from inputs when outputs are unavailable. While \citet{zouhar2025select} focus on human evaluation scenarios, \citet{zhen2025enhancing} extend this line of research to automated evaluation, presenting an active sampling framework for automatic prompt optimization targeting LLM-as-a-judge systems. Their approach begins with zero labeled examples, iteratively selecting a sample set whose annotations are used to refine the prompt. Sample selection is posed as a convex optimization problem, balancing uncertainty and diversity to maximize utility under limited labeling budgets. Complementing these task-specific sampling strategies, \citet{nofree}  benchmark the influence of LLM embedding quality on deep Active Learning strategies. They evaluate five top-ranked embedding models from the Massive Text Embedding Benchmark (\href{https://huggingface.co/spaces/mteb/leaderboard}{MTEB}) leaderboard across diverse NLP text-classification datasets using multiple query strategies. 

While all three works share the goal of reducing annotation costs through intelligent sampling, they differ from our approach in a fundamental way: they optimize the training or evaluation pipeline itself (\eg, selecting examples for human evaluation or refining prompts), whereas our work focuses on constructing a minimal yet reliable test set for model assessment.

\subsection{\at}
The concept of \at was first introduced by \citet{nguyen2018activetestingefficientrobust} in the context of visual recognition systems trained on noisy datasets. Instead of annotating an entire test set, their framework selectively queries only a subset of examples to be re-annotated by humans. These annotated examples are then used to train a statistical estimator that improves the accuracy of performance metrics and actively guides which further examples should be annotated. Building on this foundation, \citet{kossen2021active} generalize the \at framework with a focus on reducing the number of test labels required for reliable model assessment. They propose actively selecting test points to label using acquisition strategies tailored to the testing phase. While such active selection introduces sampling bias, based on the work of \citet{bias}, they derive estimators that are unbiased and exhibit reduced variance, leading to more accurate evaluation with fewer labels. They demonstrate that, across different image classification models, \at can match the accuracy of \textit{i.i.d.} evaluation with significantly fewer labeled examples. 

Although both works establish foundations for \at, they remain confined to the computer vision domain and rely on specific heuristics. In contrast, our work provides the first framework for \at in NLP, accompanied by empirical benchmarking and an adaptive method that generalizes beyond prior approaches.

\citet{pmlr-v267-li25bp,zhang2025how} share our goal of reducing the cost of model evaluation by selecting a subset of test examples while maintaining reliable estimates of overall performance. In particular, \citet{pmlr-v267-li25bp} reduces the number of benchmark prompts that need to be evaluated by modeling dependencies among evaluation examples and actively selecting prompts that are informative for estimating overall benchmark performance. Similarly, \citet{zhang2025how} addresses benchmark prediction by leveraging the historical performance of previously evaluated models to predict the performance of new models from a small subset of benchmark examples. While these approaches are closely related to our objective, they differ in scope and methodology from our setting, where the focus is on active testing and sample selection for unbiased performance estimation, rather than on reducing benchmark prompts through dependency modeling or predicting model performance from historical evaluation data.

\section{Method}
Given a set of samples $X$ with $|X|=N$, we define $X_A$ as the set of annotated samples and $X_{NA}$ as the set of non-annotated samples $(X=X_A \cup X_{NA})$. We use the same notation for the labels $Y=Y_A \cup Y_{NA}$. We have a model $f:X \rightarrow Y$.
An \at algorithm has a budget $B$ and each query could have a different cost. Given a metric $M=M(f(X),Y)$, we define the estimation error as:
\begin{equation*}
    E(M) = \mid M(X_F) - M(X_A) \mid
\end{equation*}

The estimation error is a dimensionless quantity which measures the difference between the metric computed on the fully annotated test set ($X_F$) and on the actively selected subset ($X_A$). Throughout the paper, we use an unbiased estimator for the latter, presented in the next paragraph. We focus on performance metrics commonly used by practitioners like accuracy, precision, recall (for classification, NER and POS tagging)~\cite{liu2023zero} and ROUGE (for summarization)~\cite{li2024active} as these metrics, differently from loss, provide a reliable way to evaluate model performance. Given cost $C(x)$ of annotating sample $x$, we aim to find:
\begin{align*}
    \min E(M) \text{ s.t.} \\
    \underset{x \in X_A}{\sum} C(x) \leq B
\end{align*}

We seek to select the subset $X_A$ minimizing the estimation error while ensuring that the total annotation cost does not exceed the budget $B$.

\subsection{Unbiased \at}

The simplest and most used approach to handle limited annotation budgets is uniform random sampling. Given a labeling budget $B$, it selects a subset of examples uniformly at random and, given a metric $M$, it is defined as:

\begin{equation*}
\widehat{M}_{\text {random}}=\frac{1}{B} \sum_{i=1}^B M(f(x_i), y_i)
\label{eq:ht_estimator}
\end{equation*}

In uniform sampling, the random estimator is unbiased $\left(\expected[\widehat{M}_{\text {random}}] = M\right)$ and it converges to the true empirical metric as $B \rightarrow N$, but it can produce estimates with large variance. 
%However, when working with small annotation budgets ($B << N$), random sampling can produce estimates with large variance.
%This means that while the estimate is correct on average, any single run of this method might yield results that deviate substantially from the true metric.

As pointed out in~\citet{bias,kossen2021active}, \at introduces a bias into our estimates, since the samples are not drawn from the population distribution.
To solve this issue we use the Inverse Probability Weighted Estimator~\cite{horvitz1952generalization} that  provides an unbiased estimator which is defined as:
\begin{equation*}
     \widehat{M} = \frac{1}{B} \sum_{i=1}^{B} \frac{M(f(x_i),y_i)}{Nq_i}
\end{equation*}

Where $q_i = q(x_i; X_{1:i-1}, X)$ is the probability mass 
for datum $x_i$ of being the next to be sampled and it depends on the \at strategy. Additional details and derivations for our estimator are provided in \cref{app:unbiasedness}.
During the rest of the paper we focus on accuracy rather than loss as the primary evaluation metric but the same estimators for precision, recall can be found in \cref{app:unbiased_est}. Accuracy provides an interpretable, bounded measure of performance that directly reflects classification quality and it is the metrics used in related works~\cite{kossen2021active}. For summarization we focus on ROUGE.
Estimators like PURE and LURE~\cite{bias} cannot be used in this case, since they tend to overestimate the value of the accuracy and they do not satisfy some properties of the accuracy like being in $[0,1]$ (see \cref{app:limits_estimator}). 
This limitation is problematic when evaluating accuracy, where boundedness is an essential property for interpretation. 
\subsection{\at Strategies}
We evaluate \at strategies for textual data. Although some are adapted from Active Learning, the two address different problems, as discussed in~\citet{kossen2021active} and in~\cref{app:al_vs_at}. Additional details on the \at strategies can be found in~\cref{app:at_details}.

\begin{itemize}
    \item Surrogate [RF, SVM]~\cite{kossen2021active}:
  Selects samples using uncertainty estimated by a surrogate classifier trained on embeddings and accumulated labeled data. Supported models are a 300-tree Random Forest and an RBF-SVM. Uncertainty is measured as Shannon entropy over calibrated class probabilities, and samples with highest normalized entropy are selected. A limitation is the need to progressively collect ground-truth labels to train the surrogate.
    \item Uncertainty [GP, MI]~\cite{kossen2021active}:
  Estimates uncertainty via Monte Carlo Dropout on the pretrained embedder (not the evaluated LLM), using ten forward passes. MI computes the difference between predictive entropy and average per-pass entropy, while GP measures latent-space variability through embedding standard deviations across passes.
    \item Coverage~\cite{maharanamathbb}:
  Maximizes geometric diversity in embedding space. Starting from a random sample, it iteratively selects the point farthest (Euclidean distance) from the last chosen sample, promoting broad exploration without clustering assumptions.
    \item Stratified Random: Samples proportionally from each class to preserve the original label distribution. However, it is not suitable for \at, as it requires prior knowledge of class distributions in the unlabeled dataset.
    \item Agreement (ours):
The Agreement strategy ranks unlabeled samples by the level of disagreement among attention heads. Text inputs are first encoded using a pretrained language model,  producing contextual token representations. These are then passed through a separate, lightweight multi-head self-attention layer with a fixed number of 8 heads, independent of the LLM predictor being evaluated, which outputs both transformed token embeddings and per-head attention weight matrices. For each sample, the uncertainty score is computed by taking the variance of the attention weights. More precisely, for each token position $t$, the vector containing all token-to-token attention scores for head $h$ is considered, and its variance across heads is computed. The resulting variance matrices are averaged over all token positions, producing a single scalar score for each sample. All unlabeled samples not previously selected by the algorithm are ranked in descending order according to their normalized attention-based scores.
\end{itemize}
\subsection{Stopping Criterion}
\begin{algorithm}[t]
\caption{Stopping Criterion}
\label{alg_stopping_criterion}
\begin{algorithmic}
\Require Dataset $\mathcal{D}$, budget $B$, threshold $\tau$
\For{$i$ in range($B$)}
\State Pick a sample $x_i$ using an \at strategy and annotate it $\rightarrow$ label $y_i$;
\State $X_A = X_A \cup x_i$ and $Y_A = Y_A \cup y_i$;
\State Compute the prediction $f(X_A)$;
\State Compute $\widehat{M}_{\text{random}}(f(X_A),Y_A)$;
\State Compute unbiased metric $\widehat{M}(f(X_A),Y_A)$;
\If{$ \mid \widehat{M}_{\text{random}}- \widehat{M} \mid < \tau $} \State \textbf{break}
\Else
    \State \textbf{continue}
\EndIf
\EndFor
\end{algorithmic}
\end{algorithm}

To the best of our knowledge, existing works on \at~\cite{kossen2021active} typically set a fixed budget $B$ of samples to be annotated. Therefore, the annotation process exhausts the whole budget, regardless of the actual need for it. However, in certain scenarios, this might be unnecessary. Since the actual number of samples to be annotated cannot be determined a priori, we propose \cref{alg_stopping_criterion}, which automatically terminates the annotation process as soon as we estimate that a sufficient number of samples have been annotated.
Since the algorithm terminates adaptively before the full test set is
annotated, $M(X_F)$ is unavailable. Therefore, in this setting, the
estimation error is computed as the difference between the raw metric
on $X_A$ and its unbiased estimate on the same subset.

The stopping criterion in \cref{alg_stopping_criterion} relies on the convergence between the unbiased estimator $\widehat{M}$ and the baseline random estimator $\widehat{M}_{\text{random}}$. Let $\widehat{M}^{(t)}$ denote the unbiased estimate after $t$ annotated samples. As \at progresses and the sampling strategy sufficiently explores the data distribution, the bias term $\mathbb{E}[\widehat{M}^{(t)} - \widehat{M}_{\text{random}}^{(t)}]$ converges to zero. When the empirical difference $|\widehat{M}^{(t)} - \widehat{M}_{\text{random}}^{(t)}| < \tau$ for a small threshold $\tau$, we deem the estimator stable—further annotations are unlikely to change the performance estimate meaningfully. The user-defined convergence threshold $\tau$ specifies how close the unbiased estimate ($\widehat{M}$) must be to the unweighted random-sampling estimate ($\widehat{M}_{\text{random}}$) before we consider the active set sufficiently representative to stop annotating. 
More formally, the following proposition holds:
\looseness=-1
\begin{proposition}
\label{prop_stopping_criterion_convergence}
    Let $\widehat{M}_{\text{random}}$ be the unweighted estimator of the metric and $\widehat{M}$ the inverse probability (Horvitz-Thompson) estimator defined in \cref{eq:ht_estimator}. If the sample is drawn without replacement and labeling budget $B \to N$ (approaching full annotation), then 
    $ \widehat{M}_{\text{random}} - \widehat{M} \xrightarrow{P} 0 $.
\end{proposition}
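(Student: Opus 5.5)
The plan is to route both estimators through the full-set empirical metric $\bar M := M(X_F) = \frac{1}{N}\sum_{j=1}^N m_j$, where $m_j := M(f(x_j),y_j)$, and then use the triangle inequality:
\begin{equation*}
|\widehat{M}_{\text{random}} - \widehat{M}| \le |\widehat{M}_{\text{random}} - \bar M| + |\widehat{M} - \bar M| .
\end{equation*}
It therefore suffices to show that each term tends to $0$ in probability.

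\textbf{Asymptotic regime.} For fixed $N$, ``$B \to N$'' is degenerate, so I will read the statement asymptotically. I take a sequence of test pools with $N \to \infty$ and budgets $B = B_N$ satisfying $B_N/N \to 1$ (and in particular $B_N \to \infty$). I also assume the per-sample metric values are bounded, $m_j \in [0,1]$, which holds for accuracy, precision, recall and ROUGE.

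\textbf{Random estimator.} This term is classical finite-population sampling. $\widehat{M}_{\text{random}}$ is the mean of a simple random sample drawn without replacement, so it is unbiased for $\bar M$. Its variance carries the finite-population correction:
\begin{equation*}
\mathrm{Var}(\widehat{M}_{\text{random}}) = \frac{\sigma_N^2}{B}\cdot\frac{N-B}{N-1} \le \frac{1}{4B}\cdot\frac{N-B}{N-1} \to 0 .
\end{equation*}
Chebyshev's inequality then gives $\widehat{M}_{\text{random}} - \bar M \xrightarrow{P} 0$. When $B=N$ the difference is exactly zero.

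\textbf{Horvitz--Thompson estimator: unbiasedness.} I rely on the unbiasedness derivation referenced in \cref{app:unbiasedness}, i.e.\ $\mathbb{E}[\widehat{M}] = \bar M$. With the properly reweighted sequential estimator, this comes from a martingale decomposition. Let $\mathcal{F}_{i-1}$ be the $\sigma$-field generated by the first $i-1$ draws, and write $Z_i := m_i/(N q_i)$. Because $q_i$ is the conditional sampling probability over the remaining pool, each $Z_i$ has an explicit conditional mean $\mathbb{E}[Z_i \mid \mathcal{F}_{i-1}]$, namely an average of $m_j$ over the unsampled points. One caveat: the raw form $\frac{1}{B}\sum_i m_i/(Nq_i)$ conditionally targets only the remaining pool. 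I would therefore either use the weighted version from \cref{app:unbiasedness} verbatim, or check that the leftover bias is $O((N-B)/N)$ or $O(1/B)$ and hence vanishes in the limit.

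\textbf{Horvitz--Thompson estimator: variance.} Set $D_i := Z_i - \mathbb{E}[Z_i\mid\mathcal{F}_{i-1}]$. The $D_i$ are martingale differences and therefore orthogonal, so
\begin{equation*}
\mathrm{Var}(\widehat{M}) \le \frac{1}{B^2}\sum_{i=1}^B \mathbb{E}[D_i^2] \;(+\text{bias}^2).
\end{equation*}
The bias contribution is handled by the unbiasedness step above.

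\textbf{Main obstacle.} The hardest step is bounding $\mathbb{E}[D_i^2] \le \mathbb{E}[m_i^2/(Nq_i)^2]$ uniformly. This requires a positivity or overlap condition on the acquisition distribution. The natural condition is $q_i \ge c/(N-i+1)$ for some $c>0$, i.e.\ no remaining point is starved relative to uniform sampling over the remaining pool. Under that condition $1/(Nq_i) \le (N-i+1)/(cN) \le 1/c$, so $\mathbb{E}[D_i^2] \le c^{-2}$. Hence $\mathrm{Var}(\widehat{M}) = O(1/B) \to 0$, and Chebyshev gives $\widehat{M} - \bar M \xrightarrow{P} 0$.

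I expect the weights near the end of the pool, where few points remain and $q_i$ can become very uneven, to be the delicate point. I would state the overlap condition explicitly as a hypothesis; the strategies considered here, after the usual mixing with uniform sampling as in \citet{kossen2021active}, satisfy it. Combining the two limits through the triangle inequality proves \cref{prop_stopping_criterion_convergence}.
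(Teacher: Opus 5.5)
\textbf{Gap in the Horvitz--Thompson half.} The weak point is the Horvitz--Thompson half, and you cannot defer it. Under your reading, $q_i$ is the conditional probability of the $i$-th draw over the remaining pool and $\widehat M=\frac1B\sum_{i=1}^B m_i/(Nq_i)$. For any acquisition distribution with full support on the remaining pool, $\mathbb{E}[Z_i\mid\mathcal F_{i-1}]=\frac1N\sum_{j\in R_{i-1}}m_j$, where $R_{i-1}$ is the set of $N-i+1$ unsampled points. Hence
\[
\mathbb{E}[\widehat M]=\bar M-\frac{1}{BN}\,\mathbb{E}\Bigl[\sum_{i=1}^{B}\sum_{j\in S_{i-1}}m_j\Bigr],
\]
with $S_{i-1}$ the set of the first $i-1$ draws. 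The shortfall is of order $B/N$, not $(N-B)/N$ or $1/B$, so it grows as $B\to N$.

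Take uniform acquisition ($q_i=1/(N-i+1)$) and all $m_j=1$. Then, deterministically, $\widehat M=1-\frac{B-1}{2N}$ while $\widehat M_{\text{random}}=1$, so the gap tends to $1/2$ in your regime $B/N\to1$. Your martingale and overlap argument therefore only shows that $\widehat M$ concentrates around the wrong target. The bias check you postpone would fail. Nor can \cref{app:unbiasedness} be invoked verbatim: it treats the summands as i.i.d., i.e.\ sampling with replacement, which contradicts the hypothesis. With sequential conditional $q_i$ the statement is false, so the estimator itself has to be reinterpreted.

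\textbf{How the paper avoids this.} The paper's proof takes $q_i=\Pr(i\in S)$ to be the first-order inclusion probability of a fixed-size design. Then $\sum_i q_i=B$, and $\widehat M=\frac1N\sum_{i\in S}m_i/q_i$ is the classical Horvitz--Thompson estimator. The proof works on the gap directly rather than passing through $\bar M$: $D=\sum_i I_ib_i$ with $b_i=m_i\bigl(\frac1B-\frac1{Nq_i}\bigr)$. Since $q_i\to1$ and $q_{ij}\to1$ as $B\to N$, both $\mathbb{E}[D]$ and $\operatorname{Var}(D)=\sum_{i,j}(q_{ij}-q_iq_j)b_ib_j$ vanish. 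This gives $L^2$, hence in-probability, convergence without any overlap condition.

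With your bound $m_j\in[0,1]$ this becomes a one-line argument that also covers your $N\to\infty$ regime. We have $|D|\le\bigl(1-\frac BN\bigr)+\frac1N\sum_{i\in S}\bigl(\frac1{q_i}-1\bigr)$, and the last sum has expectation $\frac1N\sum_i(1-q_i)=1-\frac BN$. Therefore $\mathbb{E}|D|\le2\bigl(1-\frac BN\bigr)$.

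\textbf{The random-estimator half.} In the proposition, as in \cref{alg_stopping_criterion}, $\widehat M_{\text{random}}$ is the unweighted mean over the same actively selected sample; both terms of $D$ share the indicators $I_i$. It is not a simple random sample, so your finite-population-correction argument does not apply. The fix is the deterministic bound $\bigl|\frac1B\sum_{i\in S}m_i-\bar M\bigr|\le1-\frac BN$, valid for every $S$ with $|S|=B$.
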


Proof can be found in \cref{app:proof_convergence}, where we also
derive an explicit finite-sample upper bound on the gap as a function of the annotation budget.

\label{sec:exp_setup} 

\begin{pipeline}[t]
\caption{\at Framework}
\label{alg_main}
\begin{algorithmic}
\Require Dataset $\mathcal{D}$, embedding strategy $E$, budget $B$, test set $X$;
\State Compute embeddings on $X$;
\State Compute model predictions $f(X)$; \Comment{\texttt{Stored for metric eval.}}
\State Given the embeddings, construct $X_A$ via the chosen \at strategy;
\State Evaluate predictions using $X_F$ and the active $X_A$ test sets and compute the metrics;
\end{algorithmic}
\end{pipeline}

\section{Experiments}
\subsection{General Framework}
\begin{table}[t]
  \centering
  \caption{Class distribution of the selected datasets and time required to compute the embeddings on the full datasets in minutes ($T$). Number of train and test samples and values of the budget are also shown.}
    \label{app:table_datasets} 
  \resizebox{\linewidth}{!}{
  \begin{tabular}{lcccccc}
        \toprule
        \textbf{Name} & \#Classes & Class Distribution (\%) & $T$ [minutes] & |Train| & |Test| & $B$ \\
        \midrule
        \href{https://huggingface.co/datasets/fancyzhx/ag_news}{AG's News}                       & 4         & [25,25,25,25] & 7 & 120,000 & 7,600 & 1,000     \\
        \href{https://huggingface.co/datasets/PolyAI/banking77}{Banking77} & 77  & - & 2 & 10,000 & 3,000 & 1,000 \\
        \href{https://huggingface.co/datasets/pietrolesci/dbpedia_14_indexed}{DBPedia}    & 14        & [7.1,7.1,7.1,7.1,7.1,7.1,7.1,7.1,7.1,7.1,7.1,7.1,7.1,7.1] & 77 & 560,000 & 5,000 & 1,000\\
        \href{https://huggingface.co/datasets/nid989/FNC-1}{FNC-1} & 4         & [6.72,1.50,17.77,74.01]  & 2 & 40,000 & 4,998 & 1,000   \\

        \href{https://huggingface.co/datasets/nyu-mll/glue}{QNLI}  & 2         & [49,51] & 3 & 104,000 & 5,463 & 1,000\\
        \href{https://huggingface.co/datasets/stanfordnlp/sst2}{SST-2} & 2         & [49,51] &  1 & 67,000 & 872 & 600 \\
        \href{https://huggingface.co/datasets/OxAISH-AL-LLM/trec6}{TREC-6} & 6         & [1.8,18.80, 27.60, 13.00, 16.2, 22.60]   & 1 & 5,452 & 500 & 400  \\
         \href{https://huggingface.co/datasets/stanfordnlp/imdb}{IMDB}$^*$   & 2         & [50,50] & 41 & 25,000 & 25,000 & 1,000     \\
         \href{https://huggingface.co/datasets/pietrolesci/pubmed-20k-rct}{PubMed}$^*$ & 5 & [10.4,15.45, 33.42,7.89,32.84] & 21 & 170,000 & 30,000 & 1,000\\
         \href{https://huggingface.co/datasets/dair-ai/emotion}{Emotion}$^*$ & 6 & [29.05,34.75,7.95, 13.75,11.20,3.30] & 2 & 16,000 & 2,000 & 1,000 \\
         \href{https://huggingface.co/datasets/cornell-movie-review-data/rotten_tomatoes}{Rotten}$^*$ & 2 & [50,50] & 1 & 8,000 & 1,000 & 600\\

         \href{https://huggingface.co/datasets/tyqiangz/multilingual-sentiments}{Multilingual}$^*$ & 3 & [33,33,34] & 1 & 1,840 & 880 & 600 \\
         \href{https://huggingface.co/datasets/Babelscape/wikineural}{WikiNeural}$^*$ & 9 & [87.7,1.9,1.5,1.3,1.0,2.2,0.9,1.7,1.8] & 6 & 92,720 & 11,579 & 1,000 \\
    \href{https://huggingface.co/datasets/universalner/universal_ner}{Universal NER}$^*$ & 9 & [93.3,1.8,0.9,1.3,1.1,1.2,0.3,0.05,0.05] & 1  & 12,543 & 2,077 & 1,000\\
         \href{https://huggingface.co/datasets/commul/universal_dependencies}{UD-ATIS}$^*$ & 17 & [17,21,2,3,1,7,1,23,5,1.5,0.5,10,4] & 1 & 4,274 & 586 & 500 \\
         \href{https://huggingface.co/datasets/commul/universal_dependencies}{UD-EWT}$^*$ & 17 & [17,12,8,3,0.4,1.5,7.1,3,7,3,8,8,1,1,4,10,6] & 1 & 12,544 & 2,077 & 1,000 \\
        \href{https://huggingface.co/datasets/abisee/cnn_dailymail}{CNN}$^*$ & NA & NA & 4 & 287,113 & 11,490 & 1,000 \\
         \href{https://huggingface.co/datasets/csebuetnlp/xlsum}{XLSum}$^*$ & NA & NA & 2 & 38,110 & 4,763 & 1,000 \\
        \bottomrule
    \end{tabular}
    }
\end{table}
\begin{figure}[t]
     \centering
    \includegraphics[width=\linewidth]{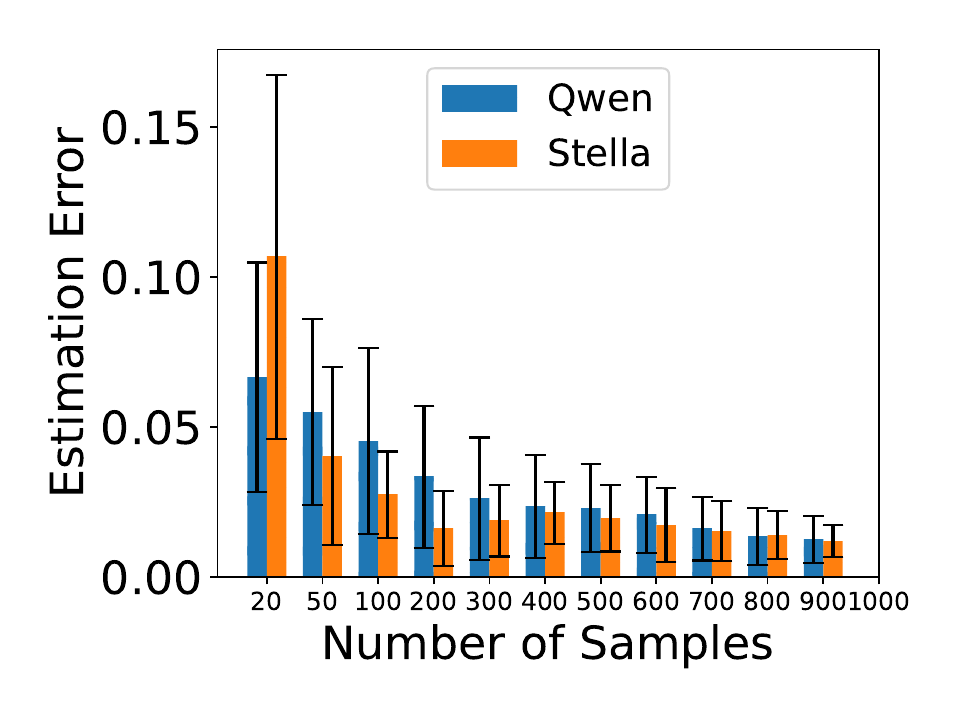}
    \caption{Estimation error with Agreement on AgNews using Qwen and Stella embedders ($\downarrow$ is better).}
    \label{fig:emb_difference}
\end{figure}

\begin{figure*}[t]
    \centering
\includegraphics[width=.7\linewidth]{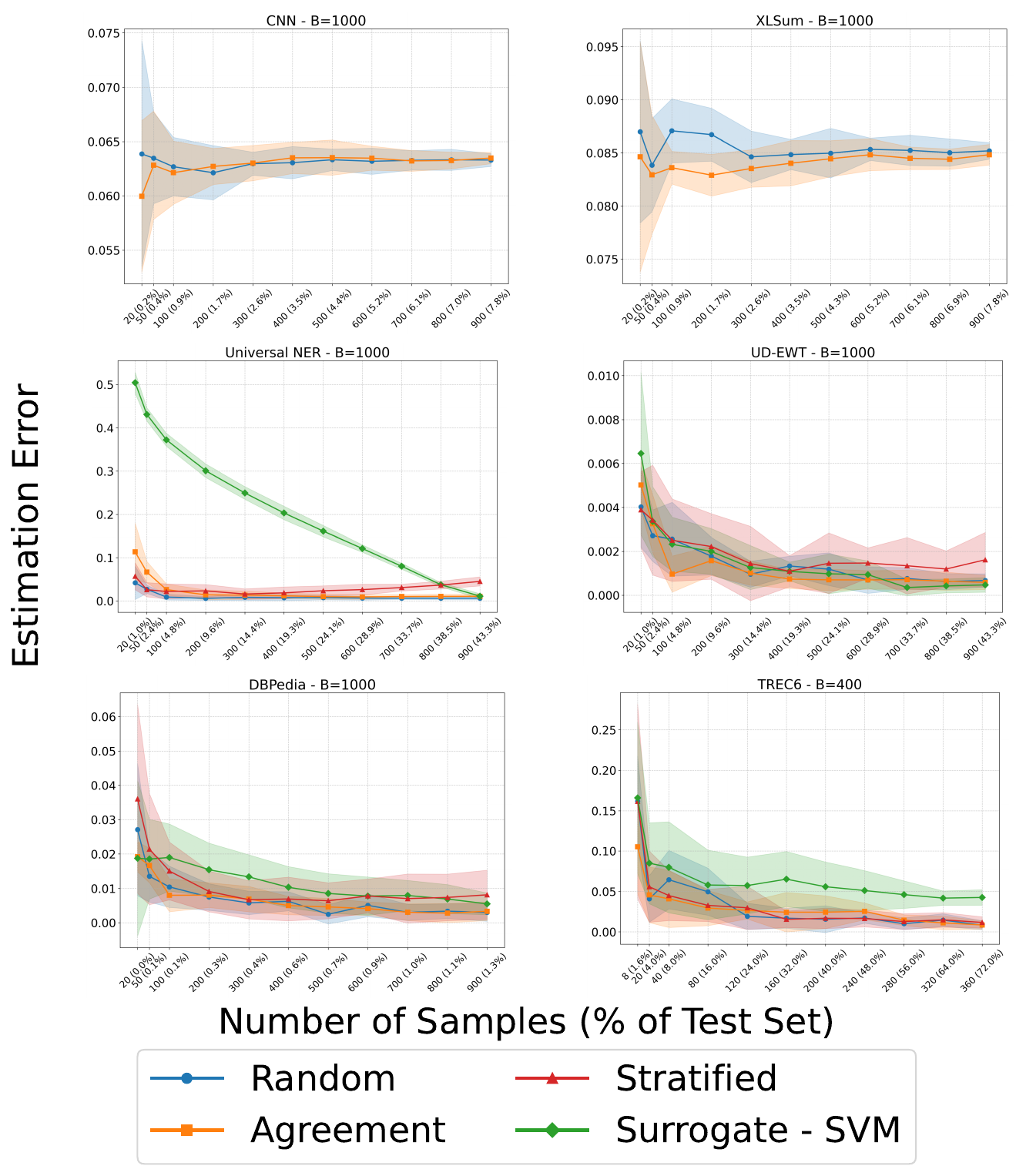}
        \caption{\at results in terms of accuracy on all the datasets by fixing the embedding strategy and the predictor. We observe that the Agreement strategy is performing better than Surrogate on the majority of the datasets. With a reduced budget Agreement is also able to beat Random ($\downarrow$ is better).}
    \label{fig:all_datasets}
\end{figure*}

We use the same experimental setup as~\citet{nofree}. Our framework follows~\cref{alg_main}. 
% In order to compute the estimation error, which is used to evaluate the results of \at, we need the predictions on the full dataset $X_F$. 
All strategies share a common structure: given the embedding space of $X$, they iteratively select samples to form the active set $X_A$ until the budget $B$ is exhausted or the stopping condition is met. At each iteration, an \at strategy assigns a score to each unlabeled sample, and the highest-scoring samples are added to $X_A$. 
We test 8 \at strategies on \ndatasets datasets using \nembeddings embedding strategies and 3 predictors. This results in a large number of experiments. Due to space constraints, we show the results on the selected \at strategies with one embedding strategy (Qwen) and one predictor (Claude 4.5). The results are averaged over 10 different seeds. Remaining plots can be found in our repository.

\paragraph{Hardware}
We run all the experiments on a machine with 96 AMD EPYC 7R13 Processor CPUs and 4 NVIDIA L40S GPUs with 48 GB of RAM.

\paragraph{Datasets}
We use \ndatasets datasets, covering text classification (12), POS tagging (2), NER (2) and summarization (2), presented in \cref{app:table_datasets}. We take all the data from the ActiveGLAE benchmark~\cite{activeglae} with the addition of more data (marked with *). We try to set, when possible, the same value of $B$ for all the datasets ($B=1000$). Additional statistics \footnote{\noindent For Banking77, values are omitted due to high number of classes, with the data being balanced ($\sim1.29\%$ of samples per class). NA indicates summarization datasets with no classes.} are presented in \cref{app:table_datasets}.

\paragraph{Embedding Strategies}
Since the quality of the embeddings influences the results of Active Learning~\cite{nofree}, we investigate whether this extends to \at. We evaluate \nembeddings different models from the MTEB leaderboard (\href{https://huggingface.co/google-bert/bert-base-uncased}{Bert}, \href{https://huggingface.co/docs/transformers/model_doc/distilbert}{DistilBert}, \href{https://huggingface.co/Qwen/Qwen3-0.6B}{Qwen}, \href{https://huggingface.co/NovaSearch/stella_en_1.5B_v5}{Stella}). As shown in \cref{fig:emb_difference}, even on the same dataset and with the same strategy, performance varies across embedding models. The effect of different embedding strategies and predictors on \at is analyzed in~\cref{app:embedding}.

\paragraph{Predictor}
To avoid bias from training, we use Claude Sonnet 4.5, Amazon Nova Pro and Qwen for the text-classification task. 
The selected predictors are able to demonstrate good performance in text-classification and summarization, as also shown in \cref{app:predictor_performance}, where we also present the prompts used for classification and summarization. For POS tagging, we use \href{https://huggingface.co/vblagoje/bert-english-uncased-finetuned-pos}{BERT} and \href{https://huggingface.co/jordigonzm/mdeberta-v3-base-multilingual-pos-tagger}{DeBERTa} finetuned, while for NER we use \href{https://huggingface.co/Babelscape/cner-base}{CNER} and \href{https://huggingface.co/guishe/nuner-v1_orgs}{UNER}.

\subsection{Multilingual Setup}
% We test how this framework works in the case of multilingual experiments. This is because annotating samples in some languages can be highly expensive, while other languages, which are more common, are cheaper. The goal of this experiment is to show how to leverage annotations from two languages and to merge them in order to optimize the budget. 
Annotation costs vary significantly across languages, with some (\eg, English) being relatively cheap and others considerably more expensive. We investigate whether \at can reduce annotation effort in high-cost languages by leveraging samples from low-cost languages, transferring evaluation knowledge across linguistic boundaries.

 \begin{figure*}[t]
    \centering
\includegraphics[width=0.7\linewidth]{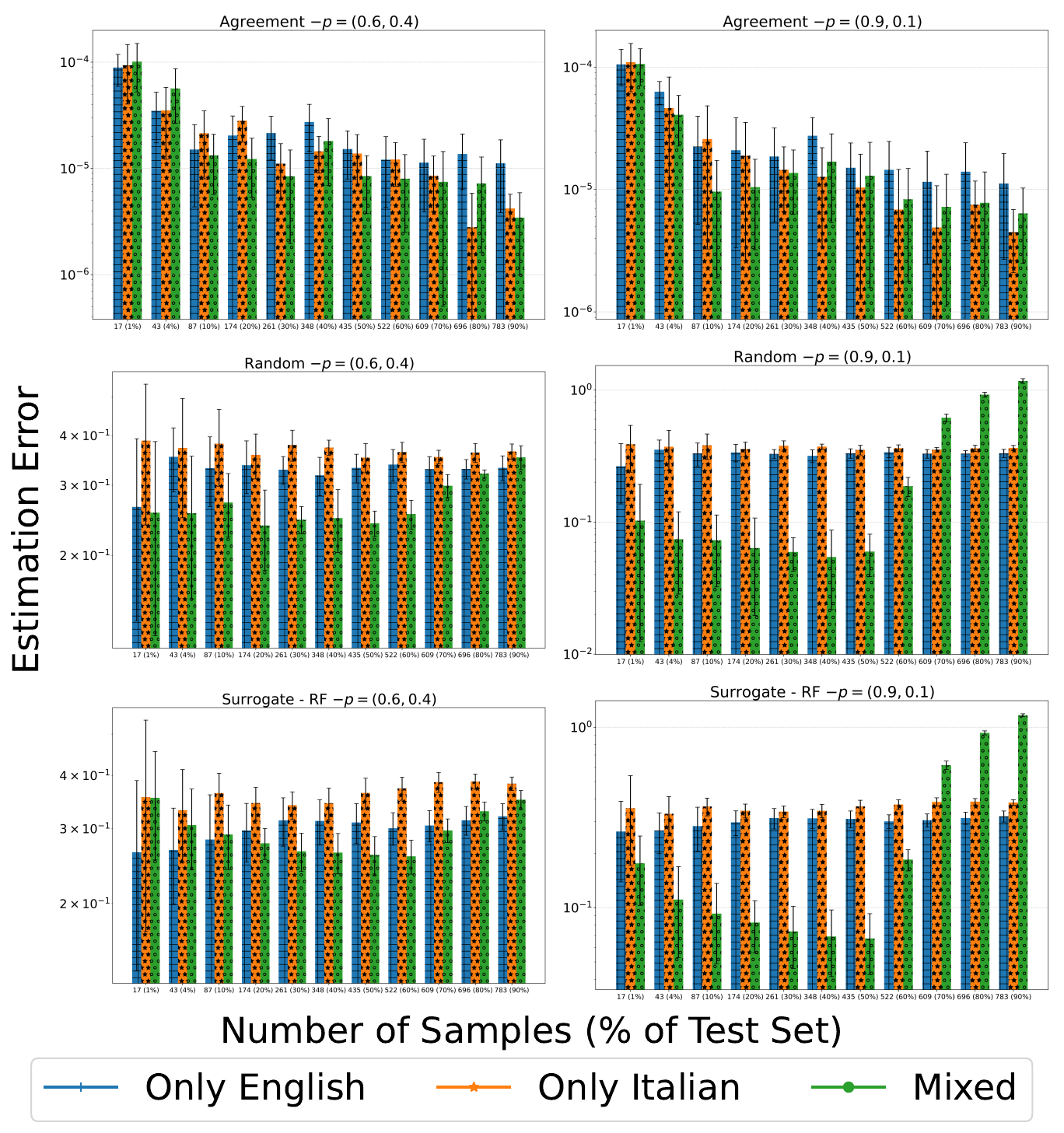}
        \caption{Multilingual results. Each row is a value of prior $p$. Each column refers to a method ($\downarrow$ is better).}
     \label{fig:multilingual}
\end{figure*}

For this experiment, we apply the steps presented in \cref{alg_main}. The difference is that dataset $\mathcal{D}$ contains samples from two languages and has a prior $p$ on the languages; in our case the prior reflects the relative cost of annotating texts in each language. The estimation error is computed with respect to the set containing the annotations for both languages.
This experiment assumes cross-lingual transferability, where a model performing well in one language achieves similar accuracy in the other. This is reasonable for multilingual models with shared vocabularies, as sentence-level classification tends to be language-agnostic~\cite{zhao2024adamergex}.

\subsection{Minority Class Sample Detection}
\at presents advantages for highly imbalanced datasets. This is relevant in domains such as finance or A/B testing, where rare but high-impact events must be accurately identified. Examples include detecting fraudulent transactions for risk management or identifying negative user feedback to maintain product quality. 
To investigate this aspect, we conduct experiments under various budget constraints $B$. We define ``minority samples'' as those belonging to the class with the fewest instances in the test set, and evaluate how effectively each strategy identifies and selects such samples. Specifically, we measure the number of minority-class samples included in the active set, providing insights into model behavior.

\section{Results} 
\begin{figure}[t]
    \centering
    \includegraphics[width=\linewidth]{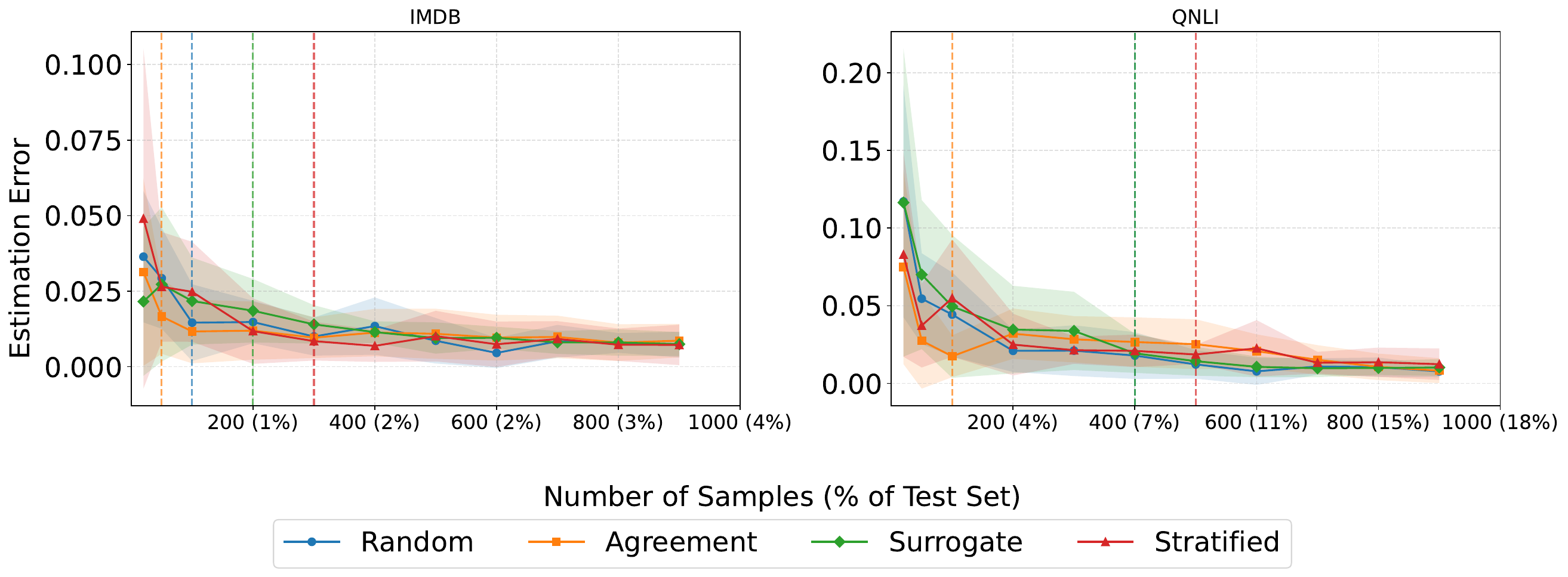}
    \caption{Results using the stopping criterion approach in order to avoid annotating all the samples ($\downarrow$ is better). Vertical lines represents when the algorithms stop.}
    \label{fig:stopping_criterion}
\end{figure}

\subsection*{RQ1: \at}
\Cref{fig:all_datasets} shows the accuracy estimation error of the top 4 strategies. We report results on six datasets spanning two classification, two summarization, one NER, and one POS tagging task. For the summarization datasets we report ROUGE-1 estimation error instead of accuracy.
Overall, all strategies perform comparably, with relatively small differences across methods. Among them, the proposed Agreement strategy tends to achieve lower estimation error, particularly at reduced values of budget $B$, where it also manages to outperform the Random baseline. Furthermore, across most datasets, the estimation error remains low even under tight budget constraints, highlighting the practical effectiveness of \at. POS tagging results follows a similar trend, with Agreement achieving competitive performance. On the UD-EWT dataset, Agreement achieves 95\% annotation reduction with an estimation error below 1\%. For NER, the curves display more irregular patterns, likely due to class imbalance, yet Agreement remains consistently effective, confirming its robustness across different task types. Full results (also in terms of Precision and Recall) are available in the repository. An additional plot including all the \at strategies is provided in \cref{fig:additional}.

\paragraph{Stopping Mechanism}

\begin{table}[t]
\centering
\caption{Time (in seconds) and annotation cost (in USD, assuming \$0.02 per sample as from \href{https://labelyourdata.com/pricing}{LabelYourData}) for different \at strategies ($\downarrow$ is better). \textit{S}: stopping criterion applied on the strategy; \textit{Full}: entire dataset annotated; \textit{NA}: stopping condition not met.}
\label{app:table_active_testing_cost}
\resizebox{\columnwidth}{!}{%
\begin{tabular}{c|cccc|cccc}
\toprule
& \multicolumn{4}{c}{IMDB} & \multicolumn{4}{c}{QNLI}\\
$B$  & T\textsubscript{Agreement}  & T\textsubscript{Random} &  T\textsubscript{Surrogate} & Cost & T\textsubscript{Agreement}  & T\textsubscript{Random} &  T\textsubscript{Surrogate} & Cost \\
\midrule
100 & 5.51 & 0.01 & 12.47 & 2 & 4.97 & 0.01 & 14.31 & 2\\
500 & 5.51 & 0.001 & 12.52 & 10 & 5.05  & 0.01 & 14.32 & 10 \\
900 & 5.52 & 0.01 & 59.05 & 18 & 5.08 & 0.01 & 43.06 & 18 \\
S-Agreement & 5.49 & - & - & \textbf{1} & 5.05 & - & - & \textbf{2} \\
S-Random & - & 0.01 & - & 2 & - & NA & - & NA \\
S-Surrogate & - & - & 12.74 & 4 & - & - & 14.71 & 8 \\
Full & 137.96 & 0.01 & 1,476.36 & 110 & 27.43 & 0.01 & 232.52 & 500 \\
\bottomrule
\end{tabular}}
\end{table} 

\Cref{fig:stopping_criterion} shows the results of the proposed stopping criterion approach on the QNLI and IMDB datasets with $B=1000$ and $\tau=0.01$. On IMDB, the Agreement approach stops the annotation process after 50 samples, meaning that it reaches values of estimation error lower than $\tau$. A similar behavior is observed for Stratified Random, which requires 250 samples, while Surrogate needs 200 samples. Comparable results are obtained on QNLI, where Agreement and Surrogate stop after 100 and 400 samples, respectively.
This means that, with respect to the full budget $B=1000$, our stopping criterion based on Agreement avoids annotating 900 samples on QNLI and 950 on IMDB.
\Cref{app:table_active_testing_cost} reports computational time and annotation cost for each strategy on IMDB and QNLI. Applying \at reduces annotation costs compared to evaluating the full test set: for instance, on QNLI the cost drops from \$500 to \$10 at $B=500$, a $50\times$ saving. Notably, Agreement combined with the stopping criterion consistently yields the lowest annotation cost across both datasets, confirming that the two contributions complement each other effectively. The cost of computing embeddings is negligible, requiring less than one hour. For clarity and due to space constraints, we report results on two datasets with additional plots in our repository.

\paragraph{Agreement strategy}
We perform an ablation to evaluate Agreement with different numbers of attention heads (4, 8, and 16). The main results use 8 heads. The results are highly consistent, indicating that the method is not sensitive to this hyperparameter. \Cref{tab:agreement_heads} reports the estimation error in terms of accuracy obtained with different numbers of attention heads across multiple annotation budgets.

\begin{table}[t]
    \centering
    \caption{Estimation error in terms of accuracy for Agreement with different numbers of attention heads across annotation budgets. Best results are shown in bold.}
    \label{tab:agreement_heads}
    \resizebox{\linewidth}{!}{
    \begin{tabular}{cccc}
        \toprule
        \textbf{Budget} & \textbf{4 Heads} & \textbf{8 Heads} & \textbf{16 Heads} \\
        \midrule
        100 & $0.0084 \pm 0.0058$ & $\mathbf{0.0075 \pm 0.0062}$ & $0.0077 \pm 0.0066$ \\
        500 & $0.0047 \pm 0.0037$ & $\mathbf{0.0038 \pm 0.0034}$ & $0.0039 \pm 0.0034$ \\
        900 & $0.0039 \pm 0.0021$ & $\mathbf{0.0034 \pm 0.0025}$ & $0.0034 \pm 0.0026$ \\
        \bottomrule
    \end{tabular}}
\end{table}

We further replace the randomly initialized attention module with the attention maps from the last layer of the pretrained BERT encoder in~\cref{tab:agreement_attention}. The performance (estimation error in terms of accuracy) remains comparable overall, showing that Agreement does not rely on a learned attention mechanism.

\begin{table}[t]
    \centering
    \caption{Estimation error in terms of accuracy for Agreement using randomly initialized attention and attention maps from the last layer of a pretrained BERT encoder. Best results are shown in bold.}
    \label{tab:agreement_attention}
    \resizebox{\linewidth}{!}{
    \begin{tabular}{ccc}
        \toprule
        \textbf{Budget} & \textbf{Random Attention} & \textbf{BERT Attention} \\
        \midrule
        100 & $\mathbf{0.0075 \pm 0.0012}$ & $0.0146 \pm 0.0015$ \\
        500 & $\mathbf{0.0038 \pm 0.0004}$ & $0.0038 \pm 0.0003$ \\
        900 & $\mathbf{0.0034 \pm 0.0005}$ & $0.0038 \pm 0.0006$ \\
        \bottomrule
    \end{tabular}}
\end{table}

Overall, these experiments indicate that Agreement is robust to the attention configuration and that its performance is driven by the quality of the contextual representations rather than by a specific attention parameterization.

\subsection*{RQ2: \at with Cost-Based Priors}
\cref{fig:multilingual} shows the results of the experiments on the Multilingual dataset. For the mixed strategy, which leverages the cross-lingual capabilities of our approach, two values of prior are considered: $p=0.9$, which favors selecting English samples (cheaper), and $p=0.6$, which provides a more balanced selection across languages. 
Allowing language selection based on a prior benefits Surrogate and Agreement, achieving estimation errors lower than single-language sampling. For Random, high priors (\eg, $p=0.9$) introduce bias toward one language, negatively affecting accuracy. Nevertheless, setting $p=0.6$ reduces annotation costs by favoring cheaper English samples, while maintaining performance within 1\% of the balanced setup, demonstrating that cost-aware \at enables multilingual evaluation with minimal accuracy loss.
Unlike results observed in previous experiments, in this setting the \at strategies consistently outperform random sampling.

\begin{table}[!t]
\centering
\caption{Per-class F1 on minority class detection ($\uparrow$ is better). Agreement and Surrogate outperform Random. Top: Emotion, bottom: PubMed.}
\label{tab:minority_detection}
\begin{minipage}{\linewidth}
\centering
\resizebox{\linewidth}{!}{%
\begin{tabular}{cccc}
\toprule
$B$ & Agreement & Random & Surrogate-SVM \\
\midrule
75 & \textbf{0.031 $\pm$ 0.006} & 0.022 $\pm$ 0.012 & \underline{0.022 $\pm$ 0.005} \\
150 & \textbf{0.042 $\pm$ 0.008} & \underline{0.036 $\pm$ 0.009} & 0.029 $\pm$ 0.006 \\
450 & \textbf{0.044 $\pm$ 0.006} & 0.033 $\pm$ 0.007 & \underline{0.035 $\pm$ 0.008} \\
750 & \textbf{0.047 $\pm$ 0.003} & 0.043 $\pm$ 0.003 & \underline{0.044 $\pm$ 0.006} \\
\bottomrule
\end{tabular}}
\end{minipage} \\
\begin{minipage}{\linewidth}
\centering
\resizebox{\linewidth}{!}{
\begin{tabular}{cccc}
\toprule
$B$ & Agreement & Random & Surrogate-SVM \\
\midrule
100 & \textbf{0.096 $\pm$ 0.001} & 0.041 $\pm$ 0.008 &  \underline{0.072 $\pm$ 0.003} \\
500 & \textbf{0.151 $\pm$ 0.003} & 0.074 $\pm$ 0.007 & \underline{0.103 $\pm$ 0.004} \\
1500 & \textbf{0.145 $\pm$ 0.006} & 0.094 $\pm$ 0.007 & \underline{0.133 $\pm$ 0.007}  \\
3000 & \textbf{0.125 $\pm$ 0.005} & 0.102 $\pm$ 0.006  & \underline{0.113 $\pm$ 0.005} \\
\bottomrule
\end{tabular}}
\end{minipage}
\end{table}

\subsection*{RQ3: Minority Class Sample Detection}

\Cref{tab:minority_detection} presents the results on minority class detection on two datasets and three \at strategies. We focus on Emotion and PubMed as they exhibit the highest class imbalance (see \cref{app:table_datasets}). To evaluate how effectively each strategy identifies minority-class samples, we treat minority detection as a binary retrieval task: we compute Precision as the fraction of selected samples that belong to the minority class, and F1 accordingly. Across both datasets, Agreement consistently achieves the highest F1, followed by Surrogate, with Random performing worst. The advantage of \at strategies is particularly pronounced at lower budgets: on PubMed at $B=100$, Agreement achieves an F1 of $0.096$, more than twice the Random baseline ($0.041$), demonstrating that informed sampling is especially beneficial when annotation resources are scarce. These results confirm that, as in the multilingual setting, \at strategies prove advantageous when targeting specific subsets of interest, with Agreement being particularly effective at detecting underrepresented classes.
We report results on two datasets; results on the remaining data are consistent and can be found in our repository.

\section{Conclusions and Future Work}

We presented a framework for \at in NLP, demonstrating its effectiveness in reducing annotation costs while maintaining reliable model evaluation. The adaptive approach successfully addresses the challenge of determining optimal sample sizes, making \at practical for real-world applications.

\paragraph{Limitations}
Our work focuses on static model evaluation settings, assuming fixed predictors and test distributions. Extending \at to evolving models, continuously updated benchmarks, or interactive NLP systems remains an open direction for future research.

\section*{Acknowledgments}
Maria Sofia Bucarelli acknowledges the French government National Research Agency (ANR) through the UCA JEDI (ANR-15-IDEX-01),  through the EUR DS4H (ANR-17-EURE-004), and through the 3IA Cote d'Azur Investments in the project with the reference number ANR-23-IACL-0001. Fabrizio Silvestri and Antonio Purificato acknowledge the project SAP\_RICERCA\_2025\_EVOLVE\_SILVES\_F\_01.

\bibliography{biblio}

@inproceedings{nguyen2018activetestingefficientrobust,
  title={Active testing: An efficient and robust framework for estimating accuracy},
  author={Nguyen, Phuc and Ramanan, Deva and Fowlkes, Charless},
  booktitle={International Conference on Machine Learning},
  pages={3759--3768},
  year={2018},
  organization={PMLR}
}

@inproceedings{kossen2021active,
  title={Active testing: Sample-efficient model evaluation},
  author={Kossen, Jannik and Farquhar, Sebastian and Gal, Yarin and Rainforth, Tom},
  booktitle={International Conference on Machine Learning},
  pages={5753--5763},
  year={2021},
  organization={PMLR}
}

@inproceedings{bias,
  title={On Statistical Bias In Active Learning: How and When to Fix It},
  author={Farquhar, Sebastian and Gal, Yarin and Rainforth, Tom},
  booktitle={International Conference on Learning Representations},
  year = {2021}
}

@inproceedings{li2024active,
  title={Active Learning for Abstractive Text Summarization via LLM-Determined Curriculum and Certainty Gain Maximization},
  author={Li, Dongyuan and Zhang, Ying and Wang, Zhen and Tan, Shiyin and Kosugi, Satoshi and Okumura, Manabu},
  booktitle={Findings of the Association for Computational Linguistics: EMNLP 2024},
  pages={8959--8971},
  year={2024}
}

@misc{nofree,
      title={No Free Lunch in Active Learning: LLM Embedding Quality Dictates Query Strategy Success}, 
      author={Lukas Rauch and Moritz Wirth and Denis Huseljic and Marek Herde and Bernhard Sick and Matthias Aßenmacher},
      year={2025},
      eprint={2506.01992},
      archivePrefix={arXiv},
      primaryClass={cs.CL},
      url={https://arxiv.org/abs/2506.01992}, 
}

@inproceedings{activeglae,
author = {Rauch, Lukas and A\ss{}enmacher, Matthias and Huseljic, Denis and Wirth, Moritz and Bischl, Bernd and Sick, Bernhard},
title = {ActiveGLAE: A Benchmark for Deep Active Learning with Transformers},
year = {2023},
isbn = {978-3-031-43411-2},
publisher = {Springer-Verlag},
address = {Berlin, Heidelberg},
url = {https://doi.org/10.1007/978-3-031-43412-9_4},
doi = {10.1007/978-3-031-43412-9_4},
booktitle = {Machine Learning and Knowledge Discovery in Databases: Research Track: European Conference, ECML PKDD 2023, Turin, Italy, September 18–22, 2023, Proceedings, Part I},
pages = {55–74},
numpages = {20},
location = {Turin, Italy}
}

@inproceedings{liu2023zero,
  title={Zero-Shot Text Classification via Self-Supervised Tuning},
  author={Liu, Chaoqun and Zhang, Wenxuan and Chen, Guizhen and Wu, Xiaobao and Tuan, Luu Anh and Chang, Chip Hong and Bing, Lidong},
  booktitle={Findings of the Association for Computational Linguistics: ACL 2023},
  pages={1743--1761},
  year={2023}
}

@inproceedings{parekh-etal-2025-dynamic,
    title = "Dynamic Strategy Planning for Efficient Question Answering with Large Language Models",
    author = "Parekh, Tanmay  and
      Prakash, Pradyot  and
      Radovic, Alexander  and
      Shekher, Akshay  and
      Savenkov, Denis",
    editor = "Chiruzzo, Luis  and
      Ritter, Alan  and
      Wang, Lu",
    booktitle = "Findings of the Association for Computational Linguistics: NAACL 2025",
    month = apr,
    year = "2025",
    address = "Albuquerque, New Mexico",
    publisher = "Association for Computational Linguistics",
    url = "https://aclanthology.org/2025.findings-naacl.336/",
    doi = "10.18653/v1/2025.findings-naacl.336",
    pages = "6038--6059",
    ISBN = "979-8-89176-195-7",
}

@inproceedings{wei2018clinical,
  title={Clinical text annotation--what factors are associated with the cost of time?},
  author={Wei, Qiang and Franklin, Amy and Cohen, Trevor and Xu, Hua},
  booktitle={AMIA Annual Symposium Proceedings},
  volume={2018},
  pages={1552},
  year={2018}
}

@article{karim2025transforming,
  title={Transforming Data Annotation with AI Agents: A Review of Architectures, Reasoning, Applications, and Impact},
  author={Karim, Md Monjurul and Khan, Sangeen and Van, Dong Hoang and Liu, Xinyue and Wang, Chunhui and Qu, Qiang},
  journal={Future Internet},
  volume={17},
  number={8},
  pages={353},
  year={2025},
  publisher={MDPI}
}

@inproceedings{ahmadnia-etal-2025-active,
    title = "Active Few-Shot Learning for Text Classification",
    author = "Ahmadnia, Saeed  and
      Yousefi Jordehi, Arash  and
      Hosseini Khasheh Heyran, Mahsa  and
      Mirroshandel, Seyed Abolghasem  and
      Rambow, Owen  and
      Caragea, Cornelia",
    editor = "Chiruzzo, Luis  and
      Ritter, Alan  and
      Wang, Lu",
    booktitle = "Proceedings of the 2025 Conference of the Nations of the Americas Chapter of the Association for Computational Linguistics: Human Language Technologies (Volume 1: Long Papers)",
    month = apr,
    year = "2025",
    address = "Albuquerque, New Mexico",
    publisher = "Association for Computational Linguistics",
    url = "https://aclanthology.org/2025.naacl-long.340/",
    doi = "10.18653/v1/2025.naacl-long.340",
    pages = "6677--6694",
    ISBN = "979-8-89176-189-6",
}

@article{kirsch2019batchbald,
  title={Batchbald: Efficient and diverse batch acquisition for deep bayesian active learning},
  author={Kirsch, Andreas and Van Amersfoort, Joost and Gal, Yarin},
  journal={Advances in neural information processing systems},
  volume={32},
  year={2019}
}

@inproceedings{imberg2020optimal,
  title={Optimal sampling in unbiased active learning},
  author={Imberg, Henrik and Jonasson, Johan and Axelson-Fisk, Marina},
  booktitle={International Conference on Artificial Intelligence and Statistics},
  pages={559--569},
  year={2020},
  organization={PMLR}
}

@inproceedings{maharanamathbb,
  title={$\mathbb{D}^2$Pruning: Message Passing for Balancing Diversity \& Difficulty in Data Pruning},
  author={Maharana, Adyasha and Yadav, Prateek and Bansal, Mohit},
  booktitle={The Twelfth International Conference on Learning Representations},
  year={2024}
}

@article{kreutzer2022quality,
  title={Quality at a glance: An audit of web-crawled multilingual datasets},
  author={Kreutzer, Julia and Caswell, Isaac and Wang, Lisa and Wahab, Ahsan and Van Esch, Daan and Ulzii-Orshikh, Nasanbayar and Tapo, Allahsera and Subramani, Nishant and Sokolov, Artem and Sikasote, Claytone and others},
  journal={Transactions of the Association for Computational Linguistics},
  volume={10},
  pages={50--72},
  year={2022},
  publisher={MIT Press One Rogers Street, Cambridge, MA 02142-1209, USA journals-info~…}
}

@inproceedings{gururangan-etal-2018-annotation,
    title = "Annotation Artifacts in Natural Language Inference Data",
    author = "Gururangan, Suchin  and
      Swayamdipta, Swabha  and
      Levy, Omer  and
      Schwartz, Roy  and
      Bowman, Samuel  and
      Smith, Noah A.",
    editor = "Walker, Marilyn  and
      Ji, Heng  and
      Stent, Amanda",
    booktitle = "Proceedings of the 2018 Conference of the North {A}merican Chapter of the Association for Computational Linguistics: Human Language Technologies, Volume 2 (Short Papers)",
    month = jun,
    year = "2018",
    address = "New Orleans, Louisiana",
    publisher = "Association for Computational Linguistics",
    url = "https://aclanthology.org/N18-2017/",
    doi = "10.18653/v1/N18-2017",
    pages = "107--112",
}

@article{zhao2024adamergex,
  title={Adamergex: Cross-lingual transfer with large language models via adaptive adapter merging},
  author={Zhao, Yiran and Zhang, Wenxuan and Wang, Huiming and Kawaguchi, Kenji and Bing, Lidong},
  journal={arXiv preprint arXiv:2402.18913},
  year={2024}
}

@inproceedings{
zhang2025how,
title={How Benchmark Prediction from Fewer Data Misses the Mark},
author={Guanhua Zhang and Florian E. Dorner and Moritz Hardt},
booktitle={NeurIPS 2025 Workshop on Evaluating the Evolving LLM Lifecycle: Benchmarks, Emergent Abilities, and Scaling},
year={2025},
url={https://openreview.net/forum?id=3pUtKEctwR}
}

@inproceedings{pmlr-v267-li25bp,
  title = 	 {Active Evaluation Acquisition for Efficient {LLM} Benchmarking},
  author =       {Li, Yang and Ma, Jie and Ballesteros, Miguel and Benajiba, Yassine and Horwood, Graham},
  booktitle = 	 {Proceedings of the 42nd International Conference on Machine Learning},
  pages = 	 {35581--35602},
  year = 	 {2025},
  editor = 	 {Singh, Aarti and Fazel, Maryam and Hsu, Daniel and Lacoste-Julien, Simon and Berkenkamp, Felix and Maharaj, Tegan and Wagstaff, Kiri and Zhu, Jerry},
  volume = 	 {267},
  series = 	 {Proceedings of Machine Learning Research},
  month = 	 {13--19 Jul},
  publisher =    {PMLR},
  url = 	 {https://proceedings.mlr.press/v267/li25bp.html},
}

@inproceedings{lin2004rouge,
  title={Rouge: A package for automatic evaluation of summaries},
  author={Lin, Chin-Yew},
  booktitle={Text summarization branches out},
  pages={74--81},
  year={2004}
}

@article{horvitz1952generalization,
  title={A generalization of sampling without replacement from a finite universe},
  author={Horvitz, Daniel G and Thompson, Donovan J},
  journal={Journal of the American statistical Association},
  volume={47},
  number={260},
  pages={663--685},
  year={1952},
  publisher={Taylor \& Francis}
}

@inproceedings{zhen2025enhancing,
  title={Enhancing LLM-as-a-Judge through Active-Sampling-based Prompt Optimization},
  author={Zhen, Cheng and Zheng, Ervine and Kuang, Jilong and Tso, Geoffrey Jay},
  booktitle={Proceedings of the 63rd Annual Meeting of the Association for Computational Linguistics (Volume 6: Industry Track)},
  pages={960--970},
  year={2025}
}

@article{zouhar2025select,
  title={How to Select Datapoints for Efficient Human Evaluation of NLG Models?},
  author={Zouhar, Vil{\'e}m and Cui, Peng and Sachan, Mrinmaya},
  journal={arXiv preprint arXiv:2501.18251},
  year={2025}
}

@article{houlsby2011bayesian,
  title={Bayesian active learning for classification and preference learning},
  author={Houlsby, Neil and Husz{\'a}r, Ferenc and Ghahramani, Zoubin and Lengyel, M{\'a}t{\'e}},
  journal={arXiv preprint arXiv:1112.5745},
  year={2011}
}

\appendix

\section{Theoretical Results}

\subsection{Unbiased Estimators}
\label{app:unbiased_est}

In this section, we derive the unbiased estimators used in our experiments for Accuracy, Precision and Recall. 
All quantities are computed under importance sampling, where each selected sample $i$ is drawn according to a probability $q_i$, and is assigned an importance weight $w_i = \frac{1}{N q_i}$.

\paragraph{Unbiased Accuracy}
The unbiased estimator of accuracy is computed as:
\begin{equation*}
     \widehat{A} = \frac{1}{B} \sum_{i=1}^{B} \frac{A(f(x_i),y_i)}{Nq_i}
\end{equation*}

\paragraph{Unbiased Precision and Recall}

For each class $c\in\{1,\dots,C\}$, we define the following importance-weighted true positives as:

\begin{equation*}
    \widehat{TP_c} = \sum_{i=1}^{M} w_i \, \mathds{1}[\hat{y}_i = c] \, \mathds{1}[y_i = c]
\end{equation*}

Since the expectation of a ratio differs from the ratio of expectations, we cannot apply importance weighting to the denominator. Instead, we use the predicted instances from the full test set:
\begin{equation*}
    \mathbb{E}\left[ \frac{U}{V}\right] \neq \frac{\mathbb{E}\left[U\right]}{\mathbb{E}\left[V\right]} \Rightarrow \mathbb{E}\left[ \frac{\widehat{TP}_c}{\mathrm{PI}_c} \right] \neq \frac{\mathbb{E}\left[ \widehat{TP}_c \right]}{\mathbb{E}\left[ PI_c \right]}
\end{equation*}

The predicted instances can be defined as:

\begin{equation*}
    \mathrm{PI}_c = \sum_{i=1}^{N} \mathds{1}[\hat{y}_i = c]
\end{equation*}

Same mechanism applies for the true instances, which are defined as:
\begin{equation*}
    \mathrm{TI}_c = \sum_{i=1}^{N} \mathds{1}[y_i = c]
\end{equation*}

The unbiased Precision $(\hat{P})$ and Recall $(\hat{R})$ are respectively defined as:

\begin{align*}
    \widehat{\mathrm{P}}
    &= \frac{1}{C} \sum_{c=1}^{C}\frac{\widehat{TP}_c}{\mathrm{PI}_c} \\    \widehat{\mathrm{R}}
    &= \frac{1}{C} \sum_{c=1}^{C}\frac{\widehat{TP}_c}{\mathrm{TI}_c} \\
\end{align*}

\paragraph{Unbiased ROUGE}

Given a predicted summary $f(x_i)$ and a reference summary $y_i$, the ROUGE-N score for a single example is defined as:
\begin{align*}
    & \text{ROUGE-N}(f(x_i), y_i) = \\
    &= \frac{\sum_{g \in \mathcal{G}_N(y_i)} \min\!\big(C(g, f(x_i)),\; C(g, y_i)\big)}{\sum_{g \in \mathcal{G}_N(y_i)} C(g, y_i)}
\end{align*}

where $\mathcal{G}_N(y_i)$ is the set of N-grams in the reference $y_i$ and $C(g, s)$ denotes the count of N-gram $g$ in sequence $s$ \cite{lin2004rouge}.

Since ROUGE-N can be decomposed as a per-example average:
\begin{equation*}
    \text{ROUGE-N} = \frac{1}{N} \sum_{i=1}^{N} \text{ROUGE-N}(f(x_i), y_i)
\end{equation*}

we can apply the Inverse Probability Weighted Estimator to obtain 
an unbiased estimate:
\begin{equation*}
    \widehat{\text{ROUGE-N}} = \frac{1}{B} \sum_{i=1}^{B} 
    \frac{\text{ROUGE-N}(f(x_i), y_i)}{N q_i}
\end{equation*}

where $q_i = q(x_i; X_{1:i-1}, X)$ is the probability mass 
for datum $x_i$ of being the next to be sampled, depending on the 
active testing strategy. This holds for the macro-averaged formulation 
of ROUGE, where the final score is obtained as the mean of 
per-example ROUGE scores.

\paragraph{Remark.} Note that while Precision and Recall admit unbiased estimators (by fixing the denominator to the full test set counts), the F1 score does not. As a nonlinear function of $P$ and $R$, unbiasedness does not transfer, and thus F1 cannot be estimated without bias under importance sampling.

\subsection{Proof of Unbiasedness and Convergence}
\label{app:unbiasedness}

To prove unbiasedness, we need to show that $\expected[\widehat{M}] = M$.

Starting with the expectation:
\begin{equation*}
    \expected[\widehat{M}] = \expected\left[\frac{1}{B} \sum_{i=1}^B \frac{M_i}{Nq}\right]
\end{equation*}

By linearity of expectation:
\begin{equation*}
    \expected[\widehat{M}] = \frac{1}{B} \sum_{i=1}^B \expected\left[\frac{M_i}{Nq_i}\right]
\end{equation*}

For each term in the sum:
\begin{equation*}
    \expected\left[\frac{M_i}{Nq}\right] = \sum_{M_i} \frac{M_i}{Nq} \mathbb{P}(M_i)
\end{equation*}

Since $q$ is the probability of selecting the index, when we multiply $\frac{M_i}{Nq}$ by $q$ (the selection probability), we get:
\begin{equation*}
    \expected\left[\frac{M_i}{Nq}\right] = \frac{M}{N}
\end{equation*}

Therefore:
\begin{equation*}
    \expected[\widehat{M}] = \frac{1}{B} \sum_{i=1}^B \frac{M_i}{N} = M
\end{equation*}

Regarding the convergence in expectation, by the Law of Large Numbers, since each term $\frac{M_i}{Nq}$ is independent and identically distributed with finite expectation $M$, as $B \to \infty$:
\begin{equation*}
    \widehat{M} \xrightarrow{p} M
\end{equation*}
The variance of the estimator decreases as $1/B$, showing that it becomes more precise with more samples:

\begin{align*}
    &Var(\widehat{M}) = Var\left(\frac{1}{B} \sum_{i=1}^B \frac{M_i}{Nq}\right) = \\
    &= \frac{1}{B^2} \sum_{i=1}^B Var\left(\frac{M_i}{Nq}\right)
\end{align*}

Since the terms are independent, and assuming finite variance $\sigma^2$:

\begin{equation*}
    Var(\widehat{M}) = \frac{\sigma^2}{B}
\end{equation*}

\subsection{Limits of the estimators of \citet{bias}}
\label{app:limits_estimator}

\citet{bias} define two estimators. We start analyzing the first one. To simplify the computations we assume that all the samples have an accuracy equal to $1$, obtaining an accuracy of $1$:
\begin{align*}
     A^{\text{PURE}} &= \frac{1}{M} \sum_{m=1}^M\left( w_m + \frac{M-m}{N}\right)\text{Acc}_{m},\ \\
   A^{\text{PURE}} &= \frac{1}{M} \sum_{m=1}^M\left( 1 + \frac{M-m}{N}\right) 1,\ \\ 
     A^{\text{PURE}} &= \frac{1}{M} \sum_{m=1}^M 1 + \frac{1}{M}\sum_{m=1}^M \frac{M}{N} - \frac{1}{M}\sum_{m=1}^M  \frac{m}{N}\\
      A^{\text{PURE}} &= \frac{1}{M} M + \frac{1}{M} \frac{M^2}{N} - \frac{1}{M}  \frac{M(M+1)}{2N}\\
      A^{\text{PURE}} &= 1 + 
    \frac{M}{N} - \frac{M+1}{2N} \\   
      A^{\text{PURE}} &= 1 + 
    \frac{2M-M-1}{N}\\
       A^{\text{PURE}} &= 1 + 
    \frac{M-1}{N}
\end{align*}
This means that the PURE, while working well in the case of the loss function,  overestimates the value of the accuracy, easily reaching values which are bigger than $1$.
For example, if we assume a total number of samples $N=500$ and a budget $B=450$, we obtain an accuracy estimated $A^{\text{PURE}}=1.898$.

Now we will focus our analysis on the second estimator proposed by \citet{bias}, with the same assumption of all the accuracy values being equal to $1$:

\begin{align*}
    A^{\text{LURE}} &= \frac{1}{M} \sum_{m=1}^M v_m\text{Acc}_m\\
        A^{\text{LURE}} &= \frac{1}{M} \sum_{m=1}^M 1 + \frac{N-M}{N-m}\Biggl( -1 + \\
        & + \frac{1}{(N-m+1)q}\Biggr) \\
        A^{\text{LURE}} &= \frac{1}{M} \sum_{m=1}^M 1 + \frac{N-M}{N-m}\left(\frac{1}{\frac{N-m+1}{N} } -1 \right)\\
        A^{\text{LURE}} &= 1 + \frac{1}{M} \sum_{m=1}^M
        \frac{N-M}{N-m}\left(\frac{1}{
    \frac{N-m+1}{N}}-1 \right)\\
          A^{\text{LURE}} &= 1 + \frac{1}{M} \sum_{m=2}^M
        \frac{N-M}{N-m}\left(\frac{N}{N-m+1} -1 \right)\\
      {A^{\text{LURE}}}&=  1 + \frac{N-M}{M} \sum_{m=2}^M
        \frac{1}{N-m}\left(\frac{m-1}{N-m+1 } \right)\\
        A^{\text{LURE}} &=  1 + \frac{N-M}{M} \sum_{k=N-M}^{N-2} \frac{N-1-k}{k(k+1)}\\
     A^{\text{LURE}} &=  1 + \frac{N-M}{M}  \left[ \sum_{k=N-M}^{N-2} \frac{N-1}{k} - \frac{N}{k+1} \right]\\
        A^{\text{LURE}} &=  1 + \frac{N-M}{M} \Biggl[ (N-1)\sum_{k=N-M}^{N-2} \frac{1}{k}  \\
        & \quad - N\sum_{k=N-M}^{N-2}\frac{1}{k+1} \Biggl]\\
           A^{\text{LURE}} &=  1 + \frac{N-M}{M} \Biggl[  (N-1)\sum_{k=N-M}^{N-2} \frac{1}{k} \\
       & \quad -  N\sum_{k=N-M+1}^{N-1}\frac{1}{k} \Biggl]\\
\end{align*}

Denoting by $H_{N} = \sum_{h=1}^N \frac{1}{h}$, 
we can rewrite: $$ \sum_{k=N-M}^{N-2} \frac{1}{k} = H_{N-2} - H_{N-M}  $$
And: $$ \sum_{k=N-M+1}^{N-1}\frac{1}{k} = H_{N-1} - H_{N-M+1}.$$

Now:

\begin{align*}
   A^{\text{LURE}} &=  1 + \frac{N-M}{M} \Biggl[  (N-1)(H_{N-2}  - H_{N-M})  \\
   & \quad - N (H_{N-1} - H_{N-M+1})  \Biggl]   \\
    A^{\text{LURE}} &= 1 + \frac{N-M}{M} \Biggl[  N (H_{N-2}- H_{N-1})  \\
   & \quad + N(  H_{N-M+1}- H_{N-M})  + \\
   & \quad \quad -  H_{N-2} + H_{N-M}  \Biggl]\\
     A^{\text{LURE}} &= 1 + \frac{N-M}{M} \Biggl[ \frac{-N}{N-1}  + \frac{N}{N-M+1}\\
   & \quad +   - H_{N-2}  + H_{N-M}  \Biggl]\\
    A^{\text{LURE}} &= 1 + \frac{N}{M} \Biggl[ - \frac{N-M}{N-1}  + \frac{N-M}{N-M+1} \Biggl]\\
   & \quad + \frac{N-M}{M}  \Biggl[ - H_{N-2}  + H_{N-M}  \Biggl]\\
\end{align*}

We can see that LURE suffers of the same problem of LURE. For example, if we set $N = 500$ $B =450$, we obtain an accuracy estimated $A^{\text{LURE}}=1.724$.

\subsection{Stopping Criterion convergence}
\label{app:proof_convergence}

\begin{proof}
We denote by $m_i := M(f(x_i),y_i)$. 
Notice that the sample size is deterministically $B$ and $q_i>0$ for all $i$. $B\to N$, hence $q_i \to 1$ and $q_{ij}\to 1$ for all $i\neq j$.
 $\frac{1}{N}\sum_{i=1}^N m_i^2 \le M < \infty$ uniformly in $N$.

We denote the sample membership indicators by $I_i := \mathbf{1}\{i\in S\}$ and define:
\begin{align}\label{eq:b_i}
    b_i \;:=\; m_i\!\left(\frac{1}{B} - \frac{1}{N q_i}\right),
    \qquad\text{so that}\\ \label{eq:D}
    D \;:=\; \widehat{M}_{\mathrm{random}} - \widehat{M}_{\mathrm{HT}}
           \;=\; \sum_{i=1}^N I_i\, b_i .
\end{align}
Let $\Delta_{ij}:=q_{ij}-q_iq_j$ denote the joint inclusion covariance
(with the convention $\Delta_{ii}=q_i(1-q_i)$).
Not that $q_{ij} = \mathbb{P}(i, j \in S)$
We can see that the mean of the gap vanishes.
Taking design expectations and using $\mathbb{E}(I_i)=q_i$:
\begin{align*}
\mathbb{E}[D]
&= \frac{1}{B}\sum_{i=1}^N q_i m_i \;-\; \frac{1}{N}\sum_{i=1}^N m_i
\\
&= \sum_{i=1}^N m_i\!\left(\frac{q_i}{B}-\frac{1}{N}\right).
\end{align*}
As $B\to N$ we have $q_i\to 1$ for every $i$, hence
$\frac{q_i}{B}-\frac{1}{N} \to \frac{1}{N}-\frac{1}{N}=0$.
Using \cref{eq:D} to control the magnitudes of $a_i$, it follows that $\mathbb{E}[D]\to 0$.

Also the variance Gap vanishes. 
Because $D$ is linear in the indicators, its design variance can be written in covariance form:
\begin{equation*}
\operatorname{Var}(D)
= \sum_{i=1}^N\sum_{j=1}^N \Delta_{ij}\, b_i b_j .
\end{equation*}

\begin{equation}
\label{eq:complete}
\operatorname{Var}(D)
= \sum_{i \neq j }(q_{i} - q_i^2)\, b_i^2 +  (q_{ij} - q_i q_j)\, b_i b_j 
\end{equation}

Since the sample size is fixed. Let's say $ \sum_{k=1}^N I_k = B$ almost surely, then:
$$ 0 = \text{Cov}(I_i, \sum_{k=1}^N I_k) = \sum_{k=1}^N \text{Cov}(I_i,  I_k). $$
So:
$$ 0 = \sum_{k=1}^N \Delta_{ik}.$$
Hence $ \sum_{i\neq j} \Delta_{i,j} = - \Delta_{ii} = - q_i(1-q_i)$.  
The total covariance weight goes to 0 as $ B \to N$, indeed in this case $q_i \to 1 $ and $1- q_i \to 0$. 
% \begin{equation}\label{eq:YG}
% \operatorname{Var}(D)
% = \frac{1}{2}\sum_{i\neq j} \big(-\Delta_{ij}\big)\, (b_i - b_j)^2 .
% \end{equation}

From \cref{eq:b_i}, implies $b_i \to m_i\left(\frac{1}{N}-\frac{1}{N}\right)=0$ and
$-\Delta_{ij}\to 0$ for each pair $i\neq j$ as $B\to N$.
The sequence $\{m_i\}$ has uniformly bounded second moments. 
Therefore, each summand in \cref{eq:complete} tends to $0$ and the finite sum converges to $0$:
$\operatorname{Var}(D)\to 0$.

Combining the two results,
\begin{equation*}
\mathbb{E}\!\left[D^2\right]
= \operatorname{Var}(D) + \big(\mathbb{E}[D]\big)^2 \;\longrightarrow\; 0
\end{equation*}
so $D\to 0$ in $L^2$, which implies $D\xrightarrow{P} 0$.
\end{proof}

\begin{proposition}[Convergence rate of the estimator gap]
\label{prop:convergence_rate}
    Let $\widehat{M}_{\emph{random}}^{(n)}$ be the unweighted sample mean
    and $\widehat{M}^{(n)}$ the Horvitz--Thompson estimator over a fixed
    budget of $n$ annotated samples drawn according to inclusion probabilities
    $q_i^{(n)} \in (0,1]$ with $\sum_{i=1}^N q_i^{(n)} = n$, and with
    metric values bounded $M_i \in [a,b]$. Then:
    \begin{align*}
         \mathbb{E}\!\left[|\widehat{M}_{\emph{random}}^{(n)} - \widehat{M}^{(n)}|\right]
        \; & \leq\;
        \frac{\sqrt{N}}{n}\,\|q^{(n)}\|_2\,\sigma_M
        \;+ \\
        & + \frac{(b-a)}{q_{\min}^{(n)}\sqrt{2n}},
    \end{align*}
    Where:
    \begin{align*}
    \|q^{(n)}\|_2 & = \bigl(\sum_i q_i^{(n)^2}\bigr)^{1/2} \\
    \sigma_M^2 &= \frac{1}{N}\sum_i(M_i - M)^2 \\
    q_{\min}^{(n)} &= \min_i q_i^{(n)}
    \end{align*}
    In particular:
    \begin{enumerate}
        \item \emph{(Uniform sampling).} If $q_i^{(n)} = n/N$ for all $i$,
        then the bias is exactly zero and:
        \[
            \mathbb{E}\!\left[|\widehat{M}_{\emph{random}}^{(n)} -
            \widehat{M}^{(n)}|\right] \;\leq\;
            \frac{(b-a)\,N}{\sqrt{2}\,n^{3/2}}
            \;=\; \mathcal{O}(n^{-3/2}).
        \]
        \item \emph{(Uncertainty sampling).} If $q_i^{(n)} = n u_i / U$
        with $U = \sum_j u_j$, then:
        \begin{align*}
            \mathbb{E}\!\left[|\widehat{M}_{\emph{random}}^{(n)} -
            \widehat{M}^{(n)}|\right] \; &\leq\;
            \frac{\sqrt{N}\,\|u\|_2}{U}\,\sigma_M
            \; + \\ +\;
            \frac{(b-a)\,U}{\sqrt{2}\,u_{\min}\,n^{3/2}},
        \end{align*}
        where the first term is an irreducible bias independent of $n$
        in the regime $n \ll N$.
    \end{enumerate}
\end{proposition}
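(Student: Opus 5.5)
The plan is to reuse the decomposition from the proof of \cref{prop_stopping_criterion_convergence}: write $D^{(n)} := \widehat{M}_{\text{random}}^{(n)} - \widehat{M}^{(n)}$. By the triangle inequality,
\[
\mathbb{E}|D^{(n)}| \le |\mathbb{E} D^{(n)}| + \mathbb{E}|D^{(n)} - \mathbb{E} D^{(n)}|.
\]
I will bound the first (bias) term by Cauchy--Schwarz and the second (fluctuation) term by a bounded-differences concentration argument. These two bounds should give exactly the two summands in the statement.

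\emph{Bias term.} With $I_i$ the inclusion indicators and $\mathbb{E} I_i = q_i^{(n)}$, the computation in \cref{app:proof_convergence} gives $\mathbb{E} D^{(n)} = \sum_i M_i\bigl(q_i^{(n)}/n - 1/N\bigr)$. The key observation is that $\sum_i q_i^{(n)} = n$. This lets me subtract the constant $M$ for free, since $\sum_i (q_i^{(n)}/n - 1/N) M = 0$. Hence
\[
\mathbb{E} D^{(n)} = \frac{1}{n}\sum_i q_i^{(n)} (M_i - M),
\]
where I also use $\sum_i (M_i - M) = 0$. Cauchy--Schwarz then gives
\[
|\mathbb{E} D^{(n)}| \le \frac{1}{n}\|q^{(n)}\|_2 \Bigl(\sum_i (M_i-M)^2\Bigr)^{1/2} = \frac{\sqrt{N}}{n}\|q^{(n)}\|_2\,\sigma_M,
\]
which is the first summand.

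\emph{Fluctuation term.} I will view the sample as $n$ sequential draws. Each draw $k$ contributes to $D^{(n)}$ a term of the form $M_{i_k}\bigl(1/n - 1/(N\,\pi_{i_k})\bigr)$, where $\pi$ is the per-draw weight matching the Horvitz--Thompson normalisation; with inclusion probabilities $q^{(n)}$, the per-draw probability is $q_i^{(n)}/n$. Since $M_i \in [a,b]$, changing a single draw moves $D^{(n)}$ by at most $c_k \lesssim (b-a)/(q_{\min}^{(n)}\,n)$ after the normalisations are tracked carefully. McDiarmid's inequality (or Azuma's inequality for the Doob martingale along the draws) then gives a sub-Gaussian tail
\[
\mathbb{P}\bigl(|D^{(n)} - \mathbb{E} D^{(n)}| > t\bigr) \le 2\exp\Bigl(-2t^2\big/\textstyle\sum_k c_k^2\Bigr).
\]
Bounding $\mathbb{E}|\cdot|$ by $\sqrt{\operatorname{Var}}$, or by integrating this tail, yields a bound of order $\sqrt{\sum_k c_k^2/2} = (b-a)/(q_{\min}^{(n)}\sqrt{2n})$, which is the second summand.

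\emph{Main obstacle.} I expect the hardest step to be this fluctuation bound, for two reasons. First, the draws are without replacement and adaptive, so they are not independent. My first attempt will be the Azuma/Doob-martingale route, which needs only bounded increments rather than independence. As a fallback, I would use Hoeffding's classical result that sampling without replacement is dominated in convex order by sampling with replacement, which justifies the i.i.d.\ treatment used in \cref{app:unbiasedness}. Second, I must check that the normalisation constants ($N$, $n$, $q_i^{(n)}$ versus per-draw probabilities) produce exactly the constant $1/\sqrt{2}$ and the power of $n$ stated.

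\emph{Special cases.} Both follow by substitution into the main bound.
\begin{enumerate}
\item Uniform sampling, $q_i^{(n)} = n/N$. Here $\mathbb{E} D^{(n)} = \frac{1}{N}\sum_i (M_i - M) = 0$, so the bias vanishes exactly. Substituting $q_{\min}^{(n)} = n/N$ into the fluctuation term gives $(b-a)N/(\sqrt{2}\,n^{3/2})$.
\item Uncertainty sampling, $q_i^{(n)} = n u_i/U$. Here $\|q^{(n)}\|_2 = n\|u\|_2/U$, so the factor $n$ cancels and the first term becomes the $n$-independent bias $\sqrt{N}\|u\|_2\sigma_M/U$. Substituting $q_{\min}^{(n)} = n u_{\min}/U$ gives the second term.
\end{enumerate}
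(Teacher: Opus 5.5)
Your decomposition $\mathbb{E}|D|\le|\mathbb{E}D|+\mathbb{E}|D-\mathbb{E}D|$ is correct, and so is your bias bound, which coincides with the paper's. The fluctuation step fails, however, and not for bookkeeping reasons. Write $g(i)=M_i\bigl(\tfrac1n-\tfrac{1}{Nq_i}\bigr)$ for the contribution of a drawn index $i$ to $D$. Replacing a draw $i$ by $j$ changes $D$ by $g(i)-g(j)$. When $M_i=M_j=m$ this equals $\tfrac{m}{N}\bigl(q_j^{-1}-q_i^{-1}\bigr)$, which has nothing to do with $b-a$. The reason is that the Horvitz--Thompson weights $1/(Nq_i)$ over a realized sample do not sum to one. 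Under the shift $M_i\mapsto M_i+c$ the gap changes by $c\bigl(1-\tfrac1N\sum_{i\in S}q_i^{-1}\bigr)$, which is in general a nondegenerate random variable when the $q_i$ are unequal. The right-hand side, by contrast, depends on the metric only through the shift-invariant quantities $b-a$ and $\sigma_M$. So the bounded-difference constants are governed by $\max_i|M_i|$ times the spread of $1/(Nq_i)$, roughly $\max(|a|,|b|)/(Nq_{\min})$, and cannot be bounded by a multiple of $b-a$. The Azuma route has a second, independent problem. Bounded per-term changes do not bound the Doob martingale increments when later draws depend on earlier ones, and fixing only the marginals $q_i$ permits arbitrarily positively dependent designs. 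Hoeffding's convex-ordering comparison, your fallback, covers only equal-probability sampling without replacement.

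In fact the inequality is false as stated, so no tracking of constants can rescue it. Take $N=2$, $n=1$, $q=(1/4,3/4)$ (an instance of case (ii) with $u=(1,3)$) and $M_1=M_2=c>0$. Then $\widehat M=2c$ or $2c/3$ with probabilities $1/4$ and $3/4$, so $\mathbb{E}|D|=c/2$. Meanwhile $\sigma_M=0$ and the right-hand side is $2\sqrt2\,(b-a)$, which vanishes for $[a,b]=[c,c]$. The bound fails even with $[a,b]=[0,1]$ fixed. Let $N=2n$, let $S$ be the first half of the pool with probability $1/4$ and the second half with probability $3/4$, and let every $M_i=1$. Then $\widehat M\in\{2,2/3\}$ and $\mathbb{E}|D|=1/2$ for every $n$, while the bound equals $4/\sqrt{2n}<1/2$ once $n>32$. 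The paper's proof splits instead as $|\widehat M_{\text{random}}-M|+|\widehat M-M|$. It reaches the stated constants only through three errors:
\begin{itemize}
\item the same translation error as yours: it takes $(b-a)/q_i$ as the Hoeffding range of $M_i/q_i$, and it assumes independence;
\item the reversed Jensen step $\mathbb{E}|\widehat M_{\text{random}}-M|\le|\mathbb{E}\widehat M_{\text{random}}-M|$, which discards exactly the fluctuation your decomposition keeps;
\item a dropped factor $\sqrt{\pi}$ when integrating the Gaussian tail.
\end{itemize}
Case (i) holds, but only trivially, since uniform inclusion makes $\widehat M=\widehat M_{\text{random}}$ identically. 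A provable version needs a design assumption (e.g.\ independent draws) and a fluctuation term built from $\max_{i,j}|g(i)-g(j)|$.
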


\begin{proof}
By the triangle inequality:
\begin{align}
    \label{eq:decomp}
    \nonumber |\widehat{M}_{\emph{random}}^{(n)} - \widehat{M}^{(n)}|
    \; &\leq\;
    |\widehat{M}_{\emph{random}}^{(n)} - M|
    \;+ \\
    \;
    + |\widehat{M}^{(n)} - M|
\end{align}

\paragraph{Bounding $|\widehat{M}_{\emph{random}}^{(n)} - M|$.}
Define $Z_i = \mathbf{1}[i \in S_n]$ so that $\mathbb{E}[Z_i] = q_i^{(n)}$.
Since the budget is fixed, $\sum_{i=1}^N q_i^{(n)} = n$, and:
\begin{align*}
    & \mathbb{E}[\widehat{M}_{\emph{random}}^{(n)}] - M
    = \frac{1}{n}\sum_{i=1}^N q_i^{(n)} M_i - \frac{1}{N}\sum_{i=1}^N M_i \\
    &= \frac{1}{n}\sum_{i=1}^N q_i^{(n)}(M_i - M)
    + M\underbrace{\!\left(\frac{1}{n}\sum_{i=1}^N q_i^{(n)} - 1\right)}_{=\,0}
\end{align*}
Where the last term vanishes by the fixed-budget constraint. Hence:
\[
    \mathbb{E}[\widehat{M}_{\emph{random}}^{(n)}] - M
    = \frac{1}{n}\sum_{i=1}^N q_i^{(n)}(M_i - M).
\]
Applying Cauchy--Schwarz:
\begin{align}
    \label{eq:bias}
    \nonumber & \left|\mathbb{E}[\widehat{M}_{\emph{random}}^{(n)}] - M\right|
    \leq \\
    \nonumber & \leq \frac{1}{n}\left(\sum_{i=1}^N q_i^{(n)2}\right)^{\frac{1}{2}}
    \!\! \left(\sum_{i=1}^N (M_i-M)^2\right)^{\frac{1}{2}}
    \!\! = \\ 
    & = \!\!
    \frac{\sqrt{N}}{n}\,\|q^{(n)}\|_2\,\sigma_M.
\end{align}
Since the bias is deterministic,
$\mathbb{E}[|\widehat{M}_{\emph{random}}^{(n)} - M|] \leq
|\mathbb{E}[\widehat{M}_{\emph{random}}^{(n)}] - M|$,
so \cref{eq:bias} bounds the expected absolute deviation directly.

\paragraph{Bounding $|\widehat{M}^{(n)} - M|$.}
Since $\widehat{M}^{(n)}$ is unbiased for $M$ and each term
$M_i/q_i^{(n)}$ lies in $[a/q_i^{(n)},\, b/q_i^{(n)}]$,
applying Hoeffding's inequality gives:
\begin{align*}
    & \Pr(|\widehat{M}^{(n)} - M| \geq u)
    \;\leq\ \\
    & \leq 2\exp\!\left(-\frac{2n^2 u^2}{\sum_{i \in S_n}(b-a)^2/q_i^{(n)2}}\right)
    \;\leq \\ 
    & \leq 2\exp\!\left(-\frac{2n\,q_{\min}^{(n)2}\,u^2}{(b-a)^2}\right).
\end{align*}
Integrating over $u > 0$:
\begin{align}
    \label{eq:stochastic}
    \nonumber & \mathbb{E}[|\widehat{M}^{(n)} - M|]
    \; \! \! \leq \! \! \;
    \! \!  \int_0^\infty \! \! 2\exp\!\left(-\frac{2n\,q_{\min}^{(n)2}\,u^2}{(b-a)^2}
    \right)\mathrm{d}u
    \; \! \! = \\
    & =\;
    \frac{(b-a)}{q_{\min}^{(n)}\sqrt{2n}}.
\end{align}

\paragraph{Combining.}
Taking expectations in \cref{eq:decomp} and substituting
\cref{eq:bias,eq:stochastic} yields the main bound.

\paragraph{Case (i): Uniform sampling.}
If $q_i^{(n)} = n/N$ for all $i$, then $\sum_i q_i^{(n)}(M_i - M) =
(n/N)\sum_i(M_i - M) = 0$, so the bias is \emph{exactly} zero.
With $q_{\min}^{(n)} = n/N$, \cref{eq:stochastic} gives
$(b-a)N/(\sqrt{2}\,n^{3/2})$.

\paragraph{Case (ii): Uncertainty sampling.}
If $q_i^{(n)} = nu_i/U$, then $\|q^{(n)}\|_2 = (n/U)\|u\|_2$ and
$q_{\min}^{(n)} = nu_{\min}/U$. Substituting into \cref{eq:bias}:
\[
    \frac{\sqrt{N}}{n} \cdot \frac{n}{U}\|u\|_2 \cdot \sigma_M
    = \frac{\sqrt{N}\,\|u\|_2}{U}\,\sigma_M,
\]
which is independent of $n$. The stochastic term gives
$(b-a)U/(\sqrt{2}\,u_{\min}\,n^{3/2})$.
\end{proof}

\section{Additional Results}
\subsection{\at Detailed Strategies}
\label{app:at_details}

\paragraph{Surrogate} 

The Surrogate strategy selects test samples based on an auxiliary predictive model that estimates the uncertainty associated with each instance. The method first extracts embeddings for all unlabeled texts using the same pretrained encoder employed in the main evaluation pipeline and then trains a surrogate classifier on the resulting feature vectors paired with the ground-truth labels accumulated so far. Depending on the configuration, 2 alternative base learners can be used: a Random Forest with 300 trees and an RBF-kernel Support Vector Machine. In both cases, the base model is wrapped into \texttt{CalibratedClassifierCV} with five-fold internal calibration in order to reduce bias in predicted probabilities, which improves the stability of the entropy-based uncertainty measure.

Once the surrogate model is initialized, embeddings for all candidate samples are computed using the shared embedding extractor. The surrogate is then fitted on these embeddings and the current pool of labels. For each candidate sample, the surrogate outputs a vector of calibrated class probabilities. The uncertainty score is defined as the Shannon entropy of this distribution, computed as:
\[
s_i = -\sum_{c} p_{i,c} \log (p_{i,c} + \epsilon),
\]
where $\epsilon = 10^{-6}$ prevents numerical instabilities. Samples with the largest normalized scores are selected, corresponding to the most uncertain instances according to the surrogate model. 

\paragraph{Uncertainty}

The Uncertainty strategy selects test cases by estimating predictive uncertainty through Monte Carlo Dropout. The method supports two acquisition functions: mutual information (MI), which measures epistemic uncertainty over model predictions, and a Gaussian prior–based score derived from the variability of intermediate representations (GP). All samples are first tokenized and then passed to the underlying transformer model with dropout layers kept active by switching the model to training mode and ten forward passes are performed for each batch of samples. For mutual information, each forward pass produces a set of logits which are converted into probability distributions via softmax. The resulting tensor has shape $(M, B, C)$, where $M=50$ is the number of Monte Carlo samples, $B$ is the batch size, and $C$ is the number of classes. Uncertainty is computed as the difference between the entropy of the mean predictive distribution and the mean entropy of the individual Monte Carlo distributions. Formally, if $p_{i,c}^{(m)}$ denotes the predicted probability for class $c$ and sample $i$ at Monte Carlo iteration $m$, the mutual information is defined as:
\begin{align*}
\operatorname{MI}(i) =
-\sum_{c} \bar{p}_{i,c} \log(\bar{p}_{i,c} + \varepsilon)
\;+ \\
- \frac{1}{M} \sum_{m=1}^{M} 
\left[
\sum_{c} p_{i,c}^{(m)} \log(p_{i,c}^{(m)} + \varepsilon)
\right]
\end{align*}
with $\varepsilon = 10^{-6}$ to avoid numerical instability. 

When the Gaussian prior acquisition is selected, the model instead extracts mean-pooled last-layer embeddings at each Monte Carlo pass. Averaging these embeddings across $M$ iterations yields a mean representation for each sample, while the standard deviation across iterations provides a measure of uncertainty. The final acquisition score is computed as the squared sum of standard deviations across embedding dimensions, thus capturing the magnitude of variability in the latent space induced by dropout.

\paragraph{Coverage}

It aims at selecting test cases by maximizing the geometric spread of sampled points in the embedding space. This method chooses points that lie farthest from previously selected ones. All input texts are first transformed into dense vector representations using the pretrained multilingual encoder, following the same tokenization and embedding extraction pipeline used across all strategies. The resulting embedding matrix provides a geometric structure on which the coverage procedure operates.

The algorithm begins by selecting one initial point uniformly at random among all available embeddings. Given the set of selected indices $S$, let $x_{i}$ denote the embedding of the last selected point. For each candidate index $j \notin S$, the method computes the Euclidean distance:
\[
d(j) = \|\,x_j - x_{i}\,\|_2,
\]
which measures how far each unselected point lies from the most recently added sample. The core idea is to iteratively append the point achieving the maximal distance, thus ensuring that new selections explore embedding regions that have not yet been represented. Points already in $S$ are masked and assigned distance $-1$ to prevent re-selection.
This process continues until the desired number of samples is reached.
Because this strategy selects points explicitly at the periphery of previously covered regions, it ensures a broad geometric exploration of the embedding manifold without relying on clustering assumptions. The procedure is computationally efficient for moderate dataset sizes due to its incremental nature, and it is entirely deterministic except for the choice of the initial random point.

\paragraph{Agreement} The Agreement strategy ranks unlabeled samples according to the level of disagreement among attention heads in a multi-head attention module applied to BERT representations. All text inputs are encoded with the pretrained BERT model from HuggingFace, using the corresponding tokenizer. For each batch, the tokenizer outputs PyTorch tensors that are forwarded through the BERT encoder. The final hidden states are then processed by a lightweight neural module consisting of an 8-head self-attention layer. The attention module receives inputs in the shape  and returns both transformed token representations and attention weight matrices. For each sample, the uncertainty score is computed by taking the variance of the attention weights across the eight heads. More precisely, for each token position $t$, the tensor containing all token-to-token attention scores for head $h$ is considered, and its variance across heads is computed. The resulting variance matrices are averaged over all token positions, producing a single scalar score for each sample. All unlabeled samples not previously selected by the algorithm are ranked in descending order according to their normalized attention-based scores.

\subsection{\at vs Active Learning}
\label{app:al_vs_at}

As pointed out by \citet{kossen2021active}, while acquisition functions for Active Learning have been extensively studied, they cannot be directly applied to \at due to fundamental conceptual differences, as \at does not involve using data to train a model.

There are three key distinctions:
\begin{itemize}
\item Conventional Active Learning methods tend to avoid regions with high aleatoric (inherent data) uncertainty while prioritizing areas with high epistemic (model) uncertainty. This principle underlies acquisition functions such as BALD \citep{houlsby2011bayesian} and BatchBALD \citep{kirsch2019batchbald}. In contrast, for \at, regions characterized by high aleatoric uncertainty may be essential for accurate estimation.

\item \citet{imberg2020optimal} demonstrate that the optimal acquisition strategy for Active Learning aims to minimize expected generalization error after training concludes. They illustrate how this objective necessitates additional considerations beyond simply reducing the variance of the loss estimator.

\item \citet{bias} establish that biased loss estimators can actually benefit the training process, as they frequently offset the inherent bias present in the training loss. However, this advantage disappears during testing, where minimizing bias becomes the primary objective.
\end{itemize}

A detailed plot containing the results on all the \at strategies can be found in~\cref{fig:additional}

\subsection{Impact of Embedding Strategy and Predictor}
\label{app:embedding}

\begin{figure*}[t]
\begin{subfigure}{0.48\textwidth}
    \centering
    \includegraphics[width=\textwidth]{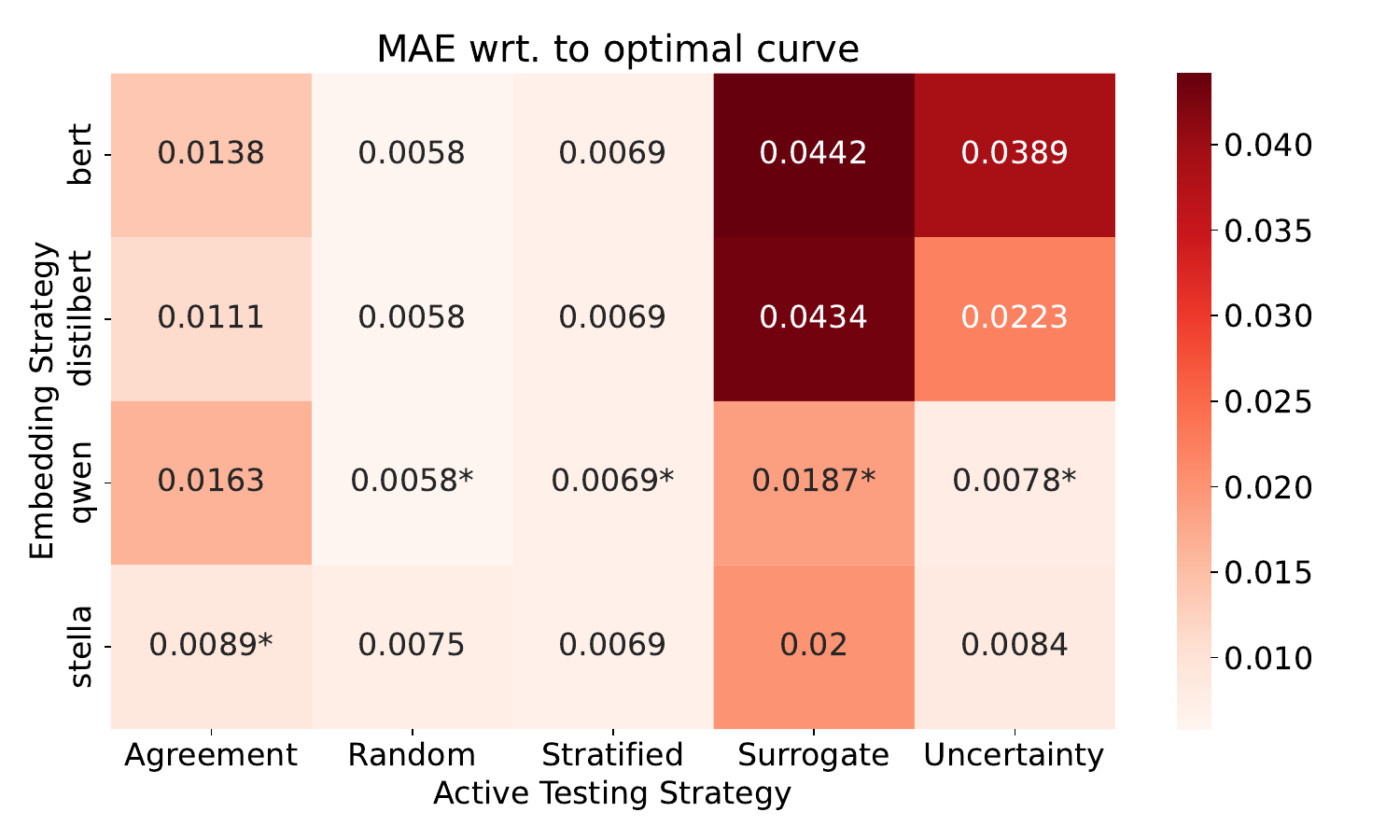}
    \end{subfigure} 
    \begin{subfigure}{0.48\textwidth}
    \centering
    \includegraphics[width=\textwidth]{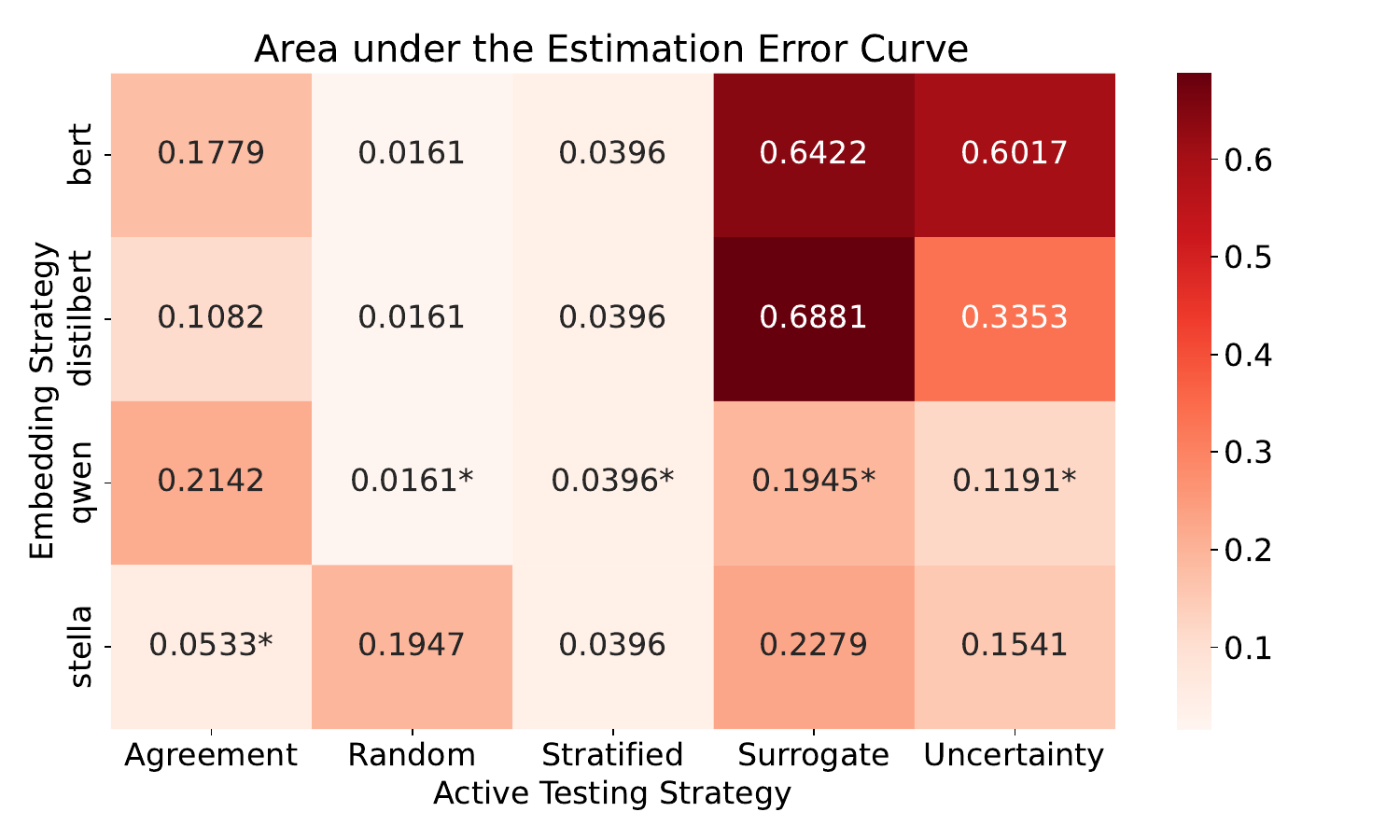}
    \end{subfigure} \\
\begin{subfigure}{0.48\textwidth}
    \centering
    \includegraphics[width=\textwidth]{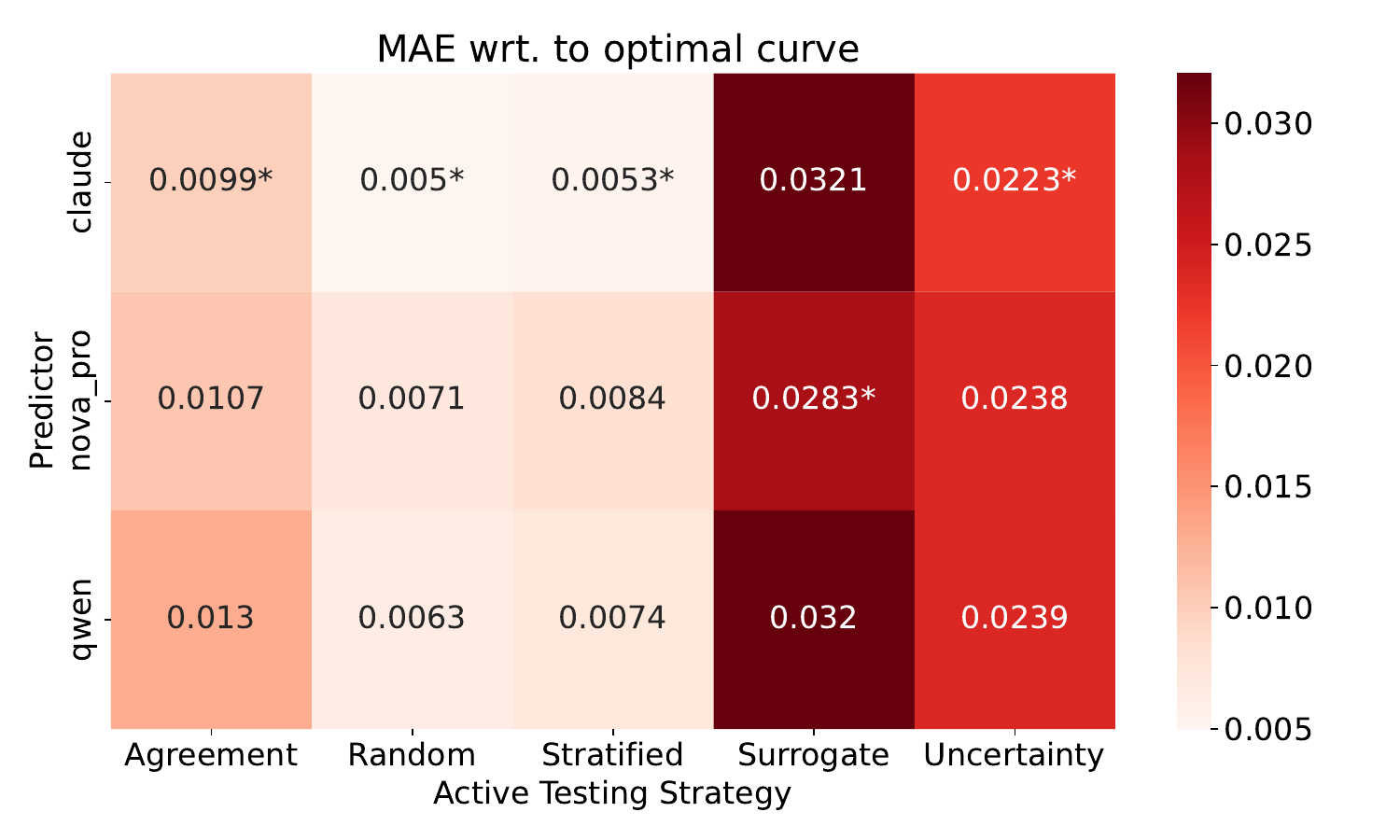}
\end{subfigure}
    \begin{subfigure}{0.48\textwidth}
    \centering
    \includegraphics[width=\textwidth]{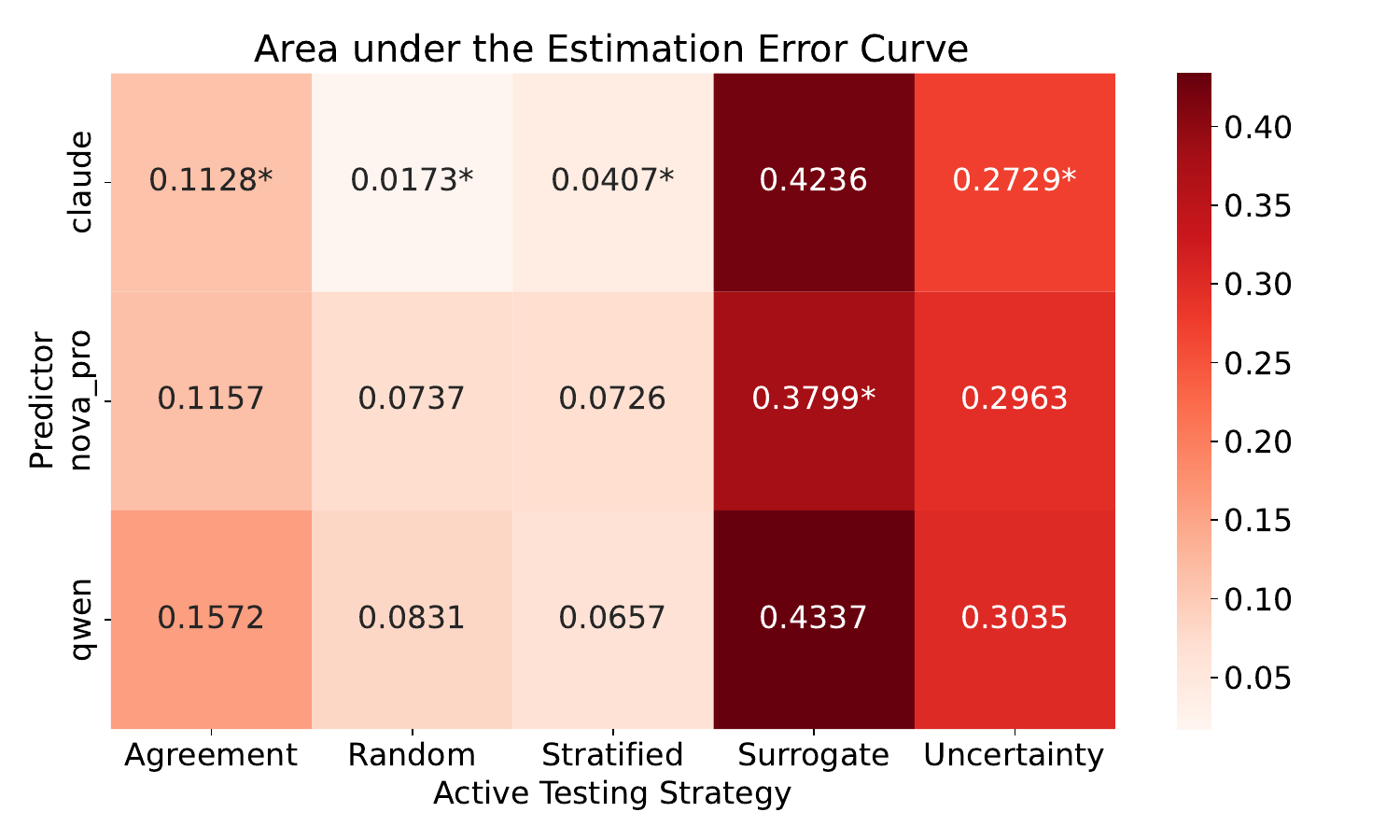}
\end{subfigure}
    \caption{Confusion matrices with the \at Strategies on the x-axis. The top figures have the embedding strategies on the y-axis while the bottom ones the predictors. The best results are marked with *. MAE wrt. optimal curve on the left and Area Under the Estimation Error Curve on the right. $\downarrow$ is better.}
    \label{fig:predictor_difference_confusion}
\end{figure*}

We investigate how the choice of embedding strategies and predictor affect the performance using two metrics: Area under the Estimation Error Curve (normalized by budget; lower=faster error reduction), and Mean Absolute Error (MAE) from the optimal curve (lower=closer to best achievable performance), averaged across datasets.

In \cref{fig:predictor_difference_confusion}, Qwen achieves the best performance across embedding strategies, while Claude performs well across predictors. Both analyses reveal the same pattern: Random and Stratified Random show comparable performance regardless of embedder or predictor, while more sophisticated \at strategies (Agreement, Surrogate) exhibit larger performance gaps. This suggests that both embedding and predictor quality become critical as the \at strategy grows more complex, since these methods rely on the geometric structure of the embedding space and model-specific signals to identify informative samples.

\subsection{Predictors' classification performance}
\label{app:predictor_performance}
\cref{app:metrics_total} shows the performance of the predictors in terms of classification, while \cref{app:metrics_summarization} in terms of summarization. 
For the classification task, the prompt we use is the following:

\begin{tcolorbox}[colback=gray!10, colframe=gray!50,
width=\columnwidth]
\small{
\label{prompt:time_dependent}
{\ttfamily This is a text classification task. The possible labels are [...] while the indices of the labels are [...].
Output only 'Label: the predicted index of the label that you predict'. This is the sentence to classify: [...]. 
}
}
\end{tcolorbox}

For the summarization task, the prompt we use is the following:

\begin{tcolorbox}[colback=gray!10, colframe=gray!50,
width=\columnwidth]
\small{
{\ttfamily This is a text summarization task. This is the sentence that must be summarized: [...]
Output only the summary. 
}
}
\end{tcolorbox}

\begin{table*}[h]
    \centering
  \resizebox{\linewidth}{!}{
  \begin{tabular}{lccc|ccc|ccc}
  \toprule
 & \multicolumn{3}{c|}{Claude} & \multicolumn{3}{c|}{Nova Pro} & \multicolumn{3}{c}{Qwen} \\
 Dataset & Accuracy & Recall & F1 & Accuracy & Recall & F1 & Accuracy & Recall & F1 \\
\midrule
        \href{https://huggingface.co/datasets/fancyzhx/ag_news}{AG's News} & \textbf{0.8814} & \textbf{0.8814} & \textbf{0.8813} & \underline{0.8082} & \underline{0.8082} & \underline{0.8066} & 0.7863 & 0.7863 & 0.7886 \\
        \href{https://huggingface.co/datasets/PolyAI/banking77}{Banking77} & \textbf{0.7925} & \textbf{0.7925} & \textbf{0.7996} & \underline{0.6942} & \underline{0.6942} & \underline{0.7072} & 0.6581 & 0.6581 & 0.6781 \\
        \href{https://huggingface.co/datasets/pietrolesci/dbpedia_14_indexed}{DBPedia} & \textbf{0.9822} & \textbf{0.9822} & \textbf{0.9822} & \underline{0.9385} & \underline{0.9385} & \underline{0.9401} & 0.9312 & 0.9312 & 0.9317 \\
        \href{https://huggingface.co/datasets/nid989/FNC-1}{FNC-1} & \textbf{0.4806} & \textbf{0.4806} & \textbf{0.4462} & 0.2933 & 0.2933 & 0.1961 & \underline{0.4082} & \underline{0.4082} & \underline{0.3406} \\
        % \href{https://huggingface.co/datasets/nyu-mll/glue}{MNLI} & M & M & M\\
        \href{https://huggingface.co/datasets/nyu-mll/glue}{QNLI} & \textbf{0.5072} & \textbf{0.5072} & \textbf{0.5988} & \underline{0.5017} & \underline{0.5017} & \underline{0.5484} & 0.4882 & 0.4882 & 0.5033 \\
        \href{https://huggingface.co/datasets/stanfordnlp/sst2}{SST-2} & \textbf{0.9564} & \textbf{0.9564} & \textbf{0.9564} & 0.9255 & 0.9255 & 0.9255 & \underline{0.9427} & \underline{0.9427} & \underline{0.9427} \\
        \href{https://huggingface.co/datasets/OxAISH-AL-LLM/trec6}{TREC-6} & \underline{0.7980} & \underline{0.7980} & \underline{0.8009} & 0.7440 & 0.7440 & 0.7395 & \textbf{0.9465} & \textbf{0.9465} & \textbf{0.9465} \\
         \href{https://huggingface.co/datasets/stanfordnlp/imdb}{IMDB}$^*$ & \textbf{0.9621} & \textbf{0.9621} & \textbf{0.9621} & \underline{0.9519} & \underline{0.9519} & \underline{0.9519} & 0.9465 & 0.9465 & 0.9465  \\
         \href{https://huggingface.co/datasets/pietrolesci/pubmed-20k-rct}{PubMed}$^*$ & \textbf{0.7202} & \textbf{0.7202} & \textbf{0.7554} & \underline{0.6453} & \underline{0.6453} & \underline{0.6826} & 0.6391 & 0.6391 & 0.6645 \\
         \href{https://huggingface.co/datasets/dair-ai/emotion}{Emotion}$^*$ & 0.5830 & 0.5830 & 0.5776 & \underline{0.5695} & \underline{0.5695} & \underline{0.5630} & \textbf{0.5830} & \textbf{0.5830} & \textbf{0.5784} \\
         \href{https://huggingface.co/datasets/cornell-movie-review-data/rotten_tomatoes}{Rotten}$^*$ & \textbf{0.9343} & \textbf{0.9343} & \textbf{0.9344} & 0.8987 & 0.8987 & 0.8986 & \underline{0.9071} & \underline{0.9071} & \underline{0.9071} \\
         \href{https://huggingface.co/datasets/tyqiangz/multilingual-sentiments}{Multilingual}$^*$ & \textbf{0.8069} & \textbf{0.8069} & \textbf{0.8081} & \underline{0.7494} & \underline{0.7494} & \underline{0.7502} & 0.7046 & 0.7046 & 0.7103 \\
         \bottomrule
  \end{tabular} 
  }
  \caption{Text classification metrics on the full datasets. For each metric and dataset, the best predictor is \textbf{bolded} while the second best is \underline{underlined}.}
    \label{app:metrics_total}
\end{table*}

\begin{table*}[h]
    \centering
  \begin{tabular}{lccc}
  \toprule
 Dataset & Claude & Nova Pro & Qwen \\
\midrule
         \href{https://huggingface.co/datasets/abisee/cnn_dailymail}{CNN}$^*$ & 0.5074 & \textbf{0.5078} & 0.4827 \\
         \href{https://huggingface.co/datasets/csebuetnlp/xlsum}{XLSum}$^*$ & 0.4427 & \textbf{0.4543} & 0.4620\\
         \bottomrule
  \end{tabular} 
  \caption{Summarization results on the full datasets in terms of ROUGE-1. For each metric and dataset, the best predictor is \textbf{bolded} while the second best is \underline{underlined}.}
    \label{app:metrics_summarization}
\end{table*}

\begin{figure*}[t]
    \centering
    \includegraphics[width=\linewidth]{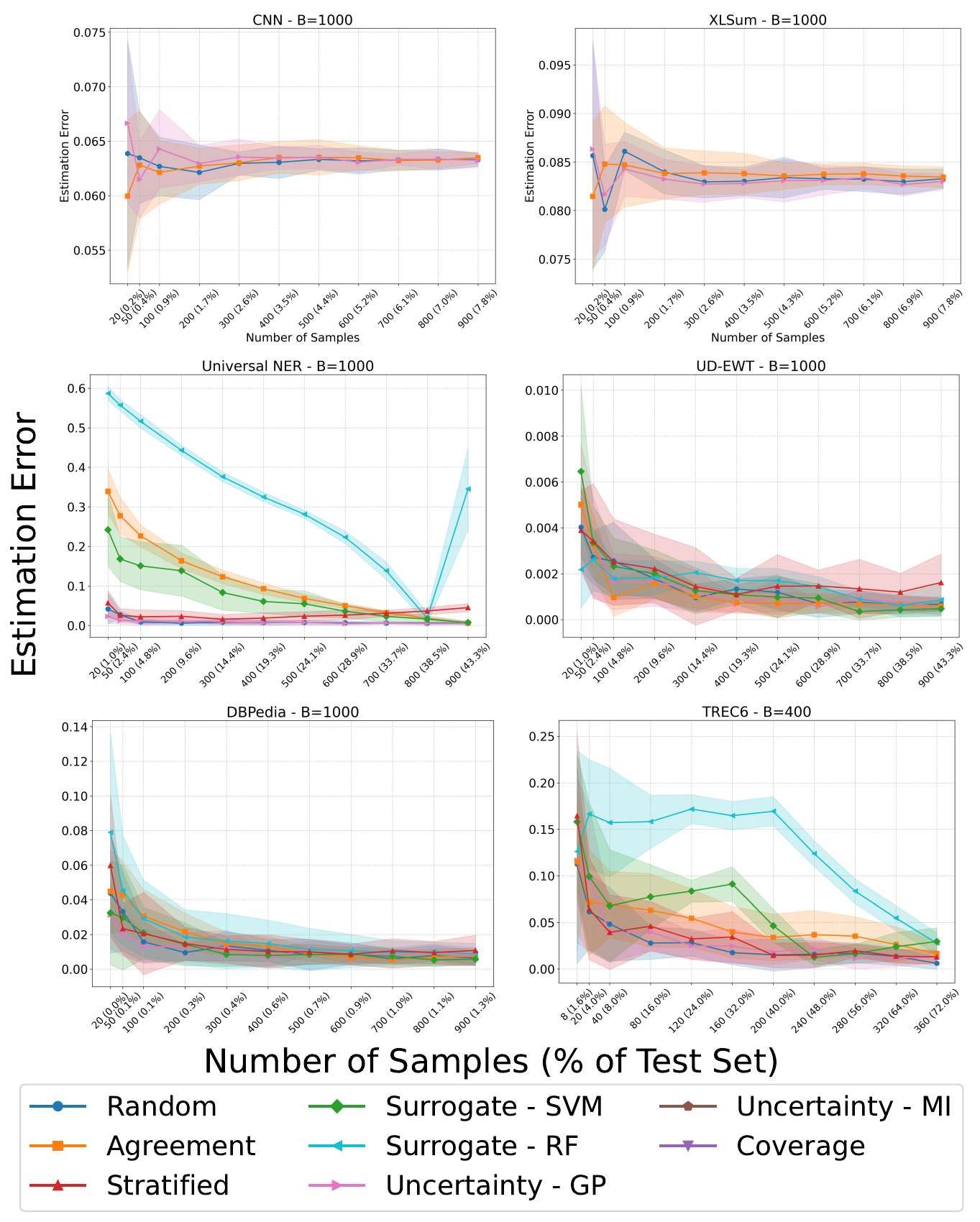}
    \caption{\at results in terms of accuracy on all the datasets by fixing the predictor (Qwen for classification, DeBERTa finetuned for POS tagging and CNER for NER and the embedding strategy (Qwen).}
    \label{fig:additional}
\end{figure*}

\end{document}